\definecolor{Gray}{gray}{0.9}
\newcommand{\yi}[1]{\textcolor{green}{}}
\newcommand{\ours}{\textsc{Love}\xspace}
\newcommand{\boldtheta}{\boldsymbol{\theta}}
\definecolor{myblue}{rgb}{0.0,0.18,0.65}
\def\eqref#1{equation~\ref{#1}}
\def\1{\bm{1}}
\def\rva{{\mathbf{a}}}
\def\rvc{{\mathbf{c}}}
\def\rvh{{\mathbf{h}}}
\def\rvm{{\mathbf{m}}}
\def\rvs{{\mathbf{s}}}
\def\rvx{{\mathbf{x}}}
\def\rvz{{\mathbf{z}}}
\DeclareMathAlphabet{\mathsfit}{\encodingdefault}{\sfdefault}{m}{sl}
\SetMathAlphabet{\mathsfit}{bold}{\encodingdefault}{\sfdefault}{bx}{n}
\def\gA{{\mathcal{A}}}
\def\gD{{\mathcal{D}}}
\def\gH{{\mathcal{H}}}
\def\gL{{\mathcal{L}}}
\def\gP{{\mathcal{P}}}
\def\gR{{\mathcal{R}}}
\def\gX{{\mathcal{X}}}
\def\gZ{{\mathcal{Z}}}
\def\sR{{\mathbb{R}}}
\newcommand{\E}{\mathbb{E}}
\DeclareMathOperator*{\argmax}{arg\,max}
\DeclareMathOperator*{\argmin}{arg\,min}
\newtheorem{theorem}{Theorem}[section]
\newtheorem{proposition}[theorem]{Proposition}
\renewcommand{\mathbf}{\boldsymbol}
\def\Ddots{\mathinner{\mkern1mu\raise\p@
\vbox{\kern7\p@\hbox{.}}\mkern2mu
\raise4\p@\hbox{.}\mkern2mu\raise7\p@\hbox{.}\mkern1mu}}
\title{Learning Options via Compression}
\author{Yiding Jiang\thanks{Equal contribution} \\
Carnegie Mellon University\\
  \texttt{yidingji@cs.cmu.edu} \\
\And
  Evan Zheran Liu\footnotemark[1]\\
  Stanford University \\
  \texttt{evanliu@cs.stanford.edu} \\
\AND
  Benjamin Eysenbach \\
  Carnegie Mellon University \\
  \texttt{beysenba@cs.cmu.edu} \\
  \And
  J. Zico Kolter \\
  Carnegie Mellon University  \\
  \texttt{zkolter@cs.cmu.edu} \\
  \And
  Chelsea Finn \\
  Stanford University \\
  \texttt{cbfinn@cs.stanford.edu} \\
}
\begin{document}
\maketitle
\begin{abstract}
Identifying statistical regularities in solutions to some tasks in multi-task reinforcement learning can accelerate the learning of new tasks.
Skill learning offers one way of identifying these regularities by decomposing pre-collected experiences into a sequence of skills.
A popular approach to skill learning is maximizing the likelihood of the pre-collected experience with latent variable models,
where the latent variables represent the skills.
However, there are often many solutions that maximize the likelihood equally well, including degenerate solutions.
To address this underspecification, we propose a new objective that combines the maximum likelihood objective with a penalty on the description length of the skills. This penalty incentivizes the skills to maximally extract common structures from the experiences.
Empirically, our objective learns skills that solve downstream tasks in fewer samples compared to skills learned from only maximizing likelihood.
Further, while most prior works in the offline multi-task setting focus on tasks with low-dimensional observations, our objective can scale to challenging tasks with high-dimensional image observations.

\end{abstract}

\section{Introduction}
\label{submission}

While learning tasks from scratch with reinforcement learning (RL) is often sample inefficient~\citep{heess2017emergence, berner2019dota}, leveraging datasets of pre-collected experience from various tasks can accelerate the learning of new tasks.
This experience can be used in numerous ways, including 
to learn a dynamics model~\citep{finn2016unsupervised, kidambi2020morel},
to learn compact representations of observations~\citep{yang2021representation},
to learn behavioral priors~\citep{singh2021parrot},
or to extract meaningful skills or options~\citep{sutton1999between, kipf2019compile, ajay2020opal, nam2021skillbased} for hierarchical reinforcement learning (HRL)~\citep{barto2003recent}.
Our work studies this last approach.
The central idea is to extract commonly occurring behaviors from the pre-collected experience as skills, which can then be used in place of primitive low-level actions to accelerate learning new tasks via planning~\citep{roderick2017deep, liu2020learning} or RL~\citep{mcgovern1998macro, lee2018composing, liu2021motor}.
For example, in a navigation domain, learned skills may correspond to navigating to different rooms or objects.

Prior methods learn skills by maximizing the likelihood of the pre-collected experience~\citep{krishnan2017ddco, kipf2019compile, ajay2020opal, zhang2021minimum}.
However, this maximum likelihood objective (or the lower bounds on it) is \emph{underspecified}: it often admits many solutions, only some of which would help learning new tasks (see Figure \ref{fig:diff-decomp}).
For example, one degenerate solution is to learn a single skill for each entire trajectory; another degenerate solution is to learn skills that operate for a single timestep, equivalent to the original action space.
Both of these decompositions can perfectly reconstruct the pre-collected experiences (and, hence, maximize likelihood), but they are of limited use for learning to solve new tasks.
Overall, the maximum likelihood objective cannot distinguish between such decompositions and potentially more useful decompositions, and we find that this underspecification problem can empirically occur even on simple tasks in our experiments.

\begin{figure}
\vspace{-7mm}
\begin{center}
\centerline{\includegraphics[width=\linewidth]{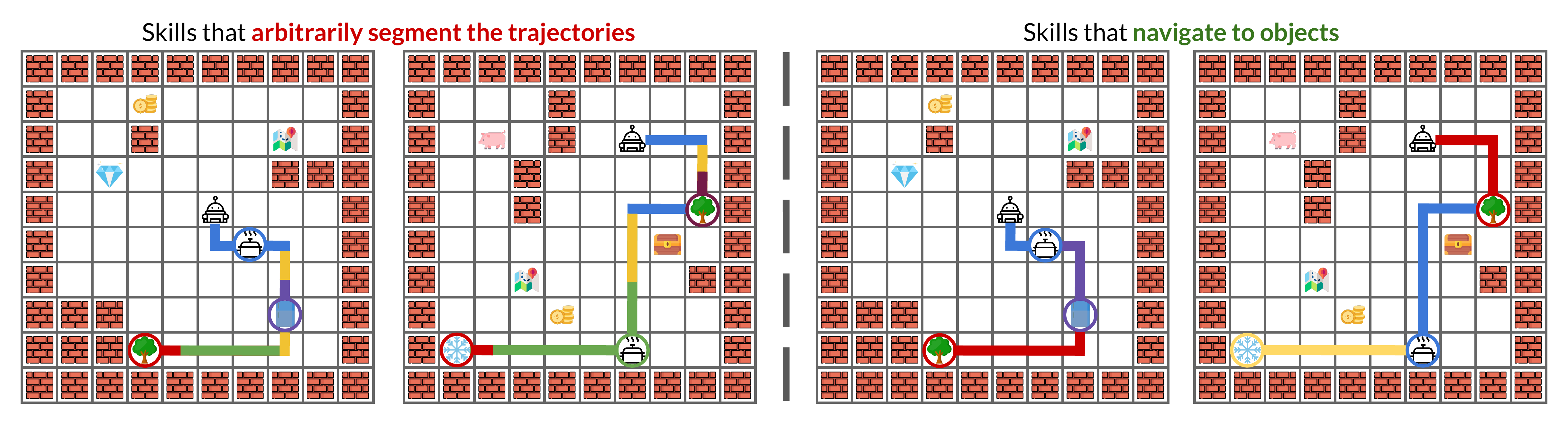}}
\vspace{-1mm}
\caption{
Skill learning via maximizing likelihood is \emph{underspecified}: the skills (represented by the different colored segments) on both the left and the right maximize the likelihood (i.e., encode the data), but the skills on the right are more likely to be useful for a new task.
Our approach factors out common temporal structures, yielding the skills on the right.
The visualized tasks are from \citet{kipf2019compile}.
}
\label{fig:diff-decomp}
\vspace{-7mm}
\end{center}
\end{figure}

To address this underspecification problem, we propose to introduce an additional objective which biases skill learning to acquire skills that can accelerate the learning of new tasks.
To construct such an objective, we make two key observations:
\begin{itemize}[itemsep=0pt,topsep=0pt, leftmargin=10mm]
    \item Skills must extract reusable temporally-extended structure from the pre-collected experiences to accelerate learning of new tasks.
    \item \emph{Compressing} data also requires identifying and extracting common structure.
\end{itemize}
Hence, we hypothesize that compression is a good objective for skill learning.
We formalize this notion of compression with the principle of \emph{minimum description length }(MDL)~\citep{rissanen1978modeling}.
Concretely, we combine the maximum likelihood objective (used in prior works) with a new term to minimize the number of bits required to represent the pre-collected experience with skills, which incentivizes the skills to find common structure.
Additionally, while prior compression-based methods typically involve discrete optimization and hence are not differentiable, we also provide a method to minimize our compression objective via gradient descent (Section \ref{sec:approach}), which enables optimizing our objective with neural networks.

Overall, our main contribution is an objective for extracting skills from offline experience, which addresses the underspecification problem with the maximum likelihood objective.
We call the resulting approach \ours (\textbf{L}earning \textbf{O}ptions \textbf{V}ia compr\textbf{E}ssion).
Using multi-task benchmarks from prior work~\citep{kipf2019compile}, we find that \ours can learn skills that enable faster RL and are more semantically meaningful compared to skills learned with prior methods.
We also find that \ours can scale to tasks with high-dimensional image observations, whereas most prior works focus on tasks with low-dimensional observations.

\section{Related Works}

We study the problem of using offline experience from one set of tasks to quickly learn new tasks.
While we use the experience to learn skills, prior works also consider other approaches of leveraging the offline experience, including to learn a dynamics model~\citep{kidambi2020morel}, to learn a compact representation of observations~\citep{yang2021representation}, and to learn behavioral priors for exploring new tasks~\citep{singh2021parrot}.

We build on a rich literature on extracting skills~\citep{sutton1999between} from \emph{offline} experience~\citep{niekum2013incremental, fox2017multi, krishnan2017ddco, sharma2018directed, kipf2019compile, bera2019podnet, pertsch2020accelerating, zhou2020plas, zhao2020augmenting, lee2020learning, ajay2020opal, Shankar2020Discovering, shankar2020learning, nam2021skillbased, zhang2021minimum, lu2021learning, zhu2021bottom, rao2021learning, tanneberg2021skid, yang2021trail}.
These works predominantly learn skills using a latent variable model, where the latent variables partition the experience into skills, and the overall model is learned by maximizing (a lower bound on) the likelihood of the experiences.
This approach is structurally similar to a long line of work in hierarchical Bayesian modeling~\citep{ghahramani2000variational, blei2001topic, fox2011sticky, linderman2016recurrent, johnson2016composing, dai2016recurrent, linderman2017bayesian, kim2019variational, harrison2019continuous, dong2020collapsed}.
We also follow this approach and build off of VTA~\citep{kim2019variational} for the latent variable model,
but differ by introducing a new compression term to address the underspecification problem with maximizing likelihood.
Several prior approaches~\citep{zhang2021minimum, zhao2020augmenting, garcia2017compression} also use compression, but yield open-loop skills, whereas we aim to learn closed-loop skills that can react to the state.
Additionally, \citet{zhang2021minimum} show that maximizing likelihood with variational inference can itself be seen as a form of compression, but this work compresses the latent variables at each time step, which does not in general yield optimal compression as our objective does.
Other works learn skills by limiting the information used from the state~\citep{goyal2019reinforcement}, whereas we learn skills by compressing entire trajectories.

Beyond learning skills from offline experience, prior works also consider learning skills with additional supervision~\citep{andreas2017modular, oh2017zero, shiarlis2018taco, jiang2019language, sontakke2021video2skill}, from online interaction without reward labels~\citep{gregor2016variational, florensa2017stochastic, eysenbach2018diversity, Sharma2020Dynamics, bagaria2021skill}, and from online interaction with reward labels~\cite{thrun1994finding, kulkarni2016hierarchical, bacon2017option, vezhnevets2017feudal, hausman2018learning, nachum2018data}.
Additionally, a large body of work on meta-reinforcement learning~\citep{duan2016rl, wang2016learning, rakelly2019efficient, zintgraf2019varibad, liu2021decoupling} also leverages prior experience to quickly learn new tasks, though not necessarily through skill learning.

\section{Preliminaries}
\label{sec:prelim}
\paragraph{Problem setting.} We consider the problem of using an offline dataset of experience to quickly solve new RL tasks, where each task is identified by a reward function.
The offline dataset is a set of trajectories $\mathcal{D} = \{\tau_i\}_{i = 1}^N$, where each trajectory is a sequence of states $\rvx \in \gX$ and actions $\rva \in \gA$: $\tau_i = \{(\rvx_1, \rva_1), (\rvx_2, \rva_2), \ldots\}$.
Each trajectory is collected by some unknown policy interacting with a Markov decision process (MDP) with dynamics $\gP(\rvx_{t + 1} \mid \rvx_{t}, \rva_t)$.
Following prior work~\citep{finn2016unsupervised, agrawal2016learning, dasari2019robonet}, we do not assume access to the rewards (i.e., the task) of the offline trajectories.

Using this dataset, our aim is to quickly solve a new task.
The new task involves interacting with the same MDP as the data collection policy in the offline dataset, except with a new reward function $\gR(\rvx, \rva)$.
The objective is to learn a policy that maximizes the expected returns in as few numbers of environment interactions as possible.

\paragraph{Variational Temporal Abstraction.} Our method builds upon the graphical model from variational temporal abstraction (VTA)~\citep{kim2019variational},
which is a method for decomposing non-control sequential data (i.e., data without actions) into subsequences.
We overview VTA below and extend it to handle actions it Section \ref{sec:vta_actions}.

\begin{wrapfigure}[15]{r}{0.387\textwidth}
\begin{center}
\vspace{-7mm}
\centerline{\includegraphics[width=0.38\textwidth]{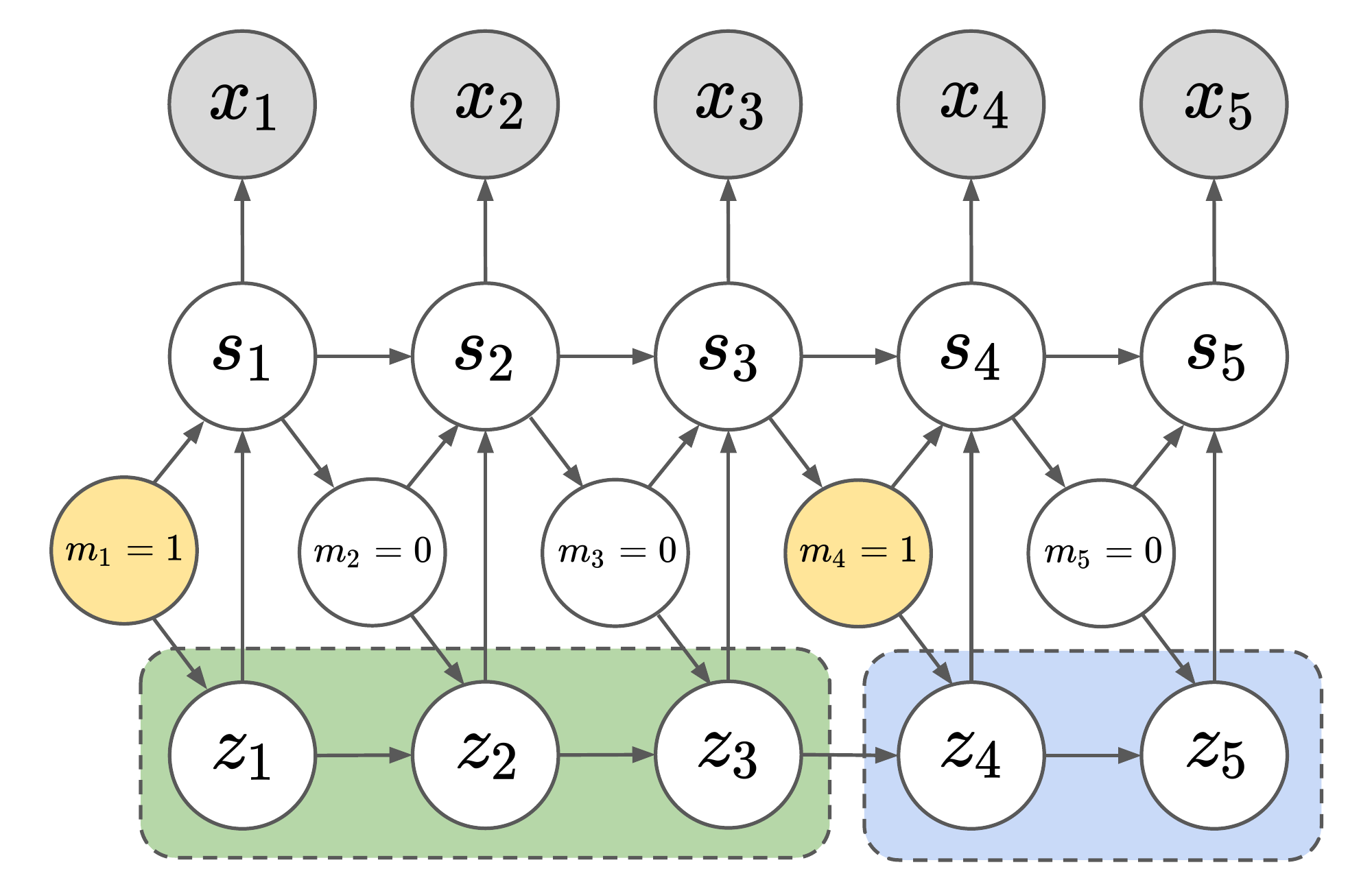}}
\vspace{-1mm}
\caption{\footnotesize The VTA graphical model (adapted from \citet{kim2019variational}).
The model decomposes the data $x_{1:T}$ into subsequences demarcated by when the boundary variables $m_t = 1$ (highlighted in \textcolor[HTML]{bf9000}{yellow}).
Each subsequence is assigned a descriptor $\rvz$, where all $\rvz_t$ within a subsequence are the same (outlined in dashed lines).
}
\label{fig:pgm}
\end{center}
\end{wrapfigure}

VTA assumes that each observation $\rvx_t$ is associated with an abstract representation $\rvs_t$ and a subsequence descriptor $\rvz_t$. An additional, binary random variable $\rvm_t$ indicates whether a new subsequence begins at the current observation.
VTA uses the following generative model:
\begin{enumerate}[itemsep=0mm, topsep=0mm, leftmargin=10mm]
    \item Determine whether to begin a new subsequence: $\rvm_t~\sim~p(\rvm_t \mid \rvs_{t - 1})$. A new subsequence always begins on the first time step, i.e., $\rvm_1 = 1$.
    \item Sample the descriptor:    $\rvz_{t} \sim p(\rvz_{t} \mid \rvz_{<t}, \rvm_t)$. If $\rvm_t = 0$, the previous descriptor is copied, i.e., $\rvz_t~=~\rvz_{t - 1}$.
    \item Sample the next state abstract representation:\\ $\rvs_{t} \sim p(\rvs_t \mid \rvs_{<t}, \rvz_t, \rvm_t) $.
    \item Sample the observation: $\rvx_{t} \sim p(\rvx_{t} \mid \rvs_{t})$.
\end{enumerate}
We visualize this sampling procedure in Figure~\ref{fig:pgm}. Formally, we can write this generative model as:
\begin{align*}
     p(\rvx_{1:T}, \rvz_{1:T}, \rvs_{1:T}, \rvm_{1:T}) 
    = \prod_{t=1}^T \,\, p(\rvx_t \mid \rvs_t) p(\rvm_t \mid \rvs_{t - 1}) p(\rvs_t \mid \rvs_{<t}, \rvz_t, \rvm_t) 
     p(\rvz_t \mid \rvz_{<t}, \rvm_t).
\end{align*}

VTA maximizes the likelihood of the observed data under this latent variable model using the standard evidence lower bound (ELBO):
\begin{align*} \footnotesize
    \log p(\rvx_{1:T}) \geq \sum_{\rvm_{1:T}} \int_{\substack{\rvs_{1:T}, \\ \rvz_{1:T}}} q_{\boldsymbol{\theta}}(\rvz_{1:T}, \rvs_{1:T}, \rvm_{1:T} \mid \rvx_{1:T}) \cdot \log \frac{p_{\boldsymbol{\theta}}(\rvx_{1:T}, \rvz_{1:T}, \rvs_{1:T}, \rvm_{1:T})}{q_{\boldsymbol{\theta}} (\rvz_{1:T}, \rvs_{1:T}, \rvm_{1:T} \mid \rvx_{1:T})}
\end{align*}
This lower bound holds for any choice of $q_{\boldtheta}$. VTA chooses to factor this distribution as:
\begin{small}
 \begin{align*}
    & q_{\boldsymbol{\theta}}(\rvz_{1:T}, \rvs_{1:T}, \rvm_{1:T} \mid \rvx_{1:T}) = \prod_{t=1}^T q_{\boldsymbol{\theta}}(\rvm_t \mid \rvx_{1:t}) \, q_{\boldsymbol{\theta}}(\rvz_t \mid \rvz_{t-1}, \rvm_t, \rvx_{1:T})\,
     q_{\boldsymbol{\theta}}(\rvs_t \mid \rvz_t, \rvm_t, \rvx_{1:t}).
\end{align*}
\end{small}

As discussed above and as we observe in Section \ref{sec:didactic}, the maximum likelihood objective is underspecified and may yield degenerate or unhelpful solutions, which we address in the next section.
Specifically, we use the graphical model from VTA, but introduce a new objective that compresses the latent variables $\rvz$, while also maximizing the ELBO.

\section{\ours: Learning Options via Compression}\label{sec:approach}

In this section, we describe our method for learning skills from pre-collected experience.
We first extend the VTA graphical model to handle experience labeled with actions, and then introduce a compression objective that encourages extracting common structure from the experience.

\subsection{A Graphical Model for Interaction Data}\label{sec:vta_actions}
We now extend the VTA graphical model~\citep{kim2019variational} to handle sequential data labeled with actions, where descriptors $\rvz$ now represent skills.
From a high level, the model partitions a trajectory into subsequences with the boundary variables $\rvm$ and labels each subsequence as a skill $\rvz$.

We wish to learn a state-conditional policy rather than the joint distribution of the whole trajectory.
To do this, we write a new generative model for the actions $\rva_{1:T}$ conditional on the state $\rvx_{1:T}$:
\begin{align*}
      p(\rvz_{1:T}, \rvs_{1:T}, \rvm_{1:T}, \rva_{1:T} \mid \rvx_{1:T}) 
     = \prod_{t=1}^T \,\, p(\rva_t \mid \rvs_t) p(\rvm_t \mid \rvs_{t - 1}) p(\rvs_t \mid \rvx_{t}, \rvz_t) 
     p(\rvz_t \mid \rvx_{t}, \rvz_{t - 1}, \rvm_{t-1}).
\end{align*}
This differs from the original VTA generative model in two ways:
(1) we introduce a $p(\rva_t \mid \rvs_t)$ term that indicates how actions are sampled;
and (2) the distributions over $\rvz_t$ and $\rvs_t$ do not depend on all previous $\rvz_{1:t}$ and $\rvs_{1:t}$ to encode the Markov property.
We then augment the variational distribution such that the posterior over skills $\rvz$ also depends on actions:
\begin{small}
\begin{align*}
     q_{\mathbf{\theta}}(\rvz_{1:T}, \rvs_{1:T}, \rvm_{1:T} \mid \rvx_{1:T}, \rva_{1:T}) =  \prod_{t=1}^T \underbrace{ q_{\mathbf{\theta}}(\rvm_t \mid \rvx_{1:t})}_{\substack{\text{termination} \\ \text{policy}}} \,\underbrace{ q_{\mathbf{\theta}}(\rvz_t \mid \rvz_{t - 1}, \rvm_t, \rvx_{1:T}, \rva_{1:T})}_{\text{skill posterior}}
    \underbrace{ q_{\mathbf{\theta}}(\rvs_t \mid \rvz_t, \rvx_t)}_{\text{state abstraction posterior}}.
\end{align*}
\end{small}

Overall, this yields a model with 3 learned components:
\begin{itemize}[itemsep=0pt, topsep=0pt, leftmargin=10mm]
    \item A \emph{state abstraction posterior} $q_{\mathbf{\theta}}(\rvs_t \mid \rvz_t, \rvx_t)$ and \emph{decoder} $p_{\mathbf{\theta}}(\rva_t \mid \rvs_t)$, which together form a distribution over actions conditioned on the skill $\rvz_t$ and current state $\rvx_t$.
    We refer to these jointly as the \emph{skill policy} $\pi_{\mathbf{\theta}}(\rva_t \mid \rvz_t, \rvx_t) = \E_{\rvs_t \sim q_{\mathbf{\theta}}(\rvs_t \mid \rvz_t, \rvx_t)}\left[p_{\mathbf{\theta}}(\rva_t \mid \rvs_t)\right]$.
    \item A \emph{termination policy} $q_{\mathbf{\theta}}(\rvm_t \mid \rvx_{1:t})$, which determines whether the previous skill terminates.
    \item A \emph{skill posterior} $q_{\mathbf{\theta}}(\rvz_t \mid \rvz_{t - 1}, \rvm_t, \rvx_{1:T}, \rva_{1:T})$, which outputs the current skill $\rvz_t$ conditioned on all states $\rvx_{1:T}$ and actions $\rva_{1:T}$.
    This depends on $\rvm_t$ and $\rvz_{t - 1}$, since the boundary variables determine whether the previous skill $\rvz_{t - 1}$ terminates: when $\rvm_t = 0$, the previous skill does not terminate and $\rvz_t = \rvz_{t - 1}$.
\end{itemize}

Once this model is learned, the skill policy and termination policy represent the skills, without a need for the skill posterior:
Given a skill variable $\rvz$, the skill policy encodes what actions the skill takes, and the termination policy determines when the skill stops.
In the next section, we describe our objective for learning this model.
Then, in Section \ref{sec:hrl}, we describe how we use the skill policy and termination policy to learn new tasks.

\subsection{Discovering Structure via Compression}\label{sec:compression_objective}

As previously discussed, the maximum likelihood objective is underspecified for skill learning, because many skills can maximize likelihood, independent of whether they are useful for learning new tasks.
In this section, we address this with a new objective that attempts to measure how useful skills will be for learning new tasks in terms of compression.
Our objective is based on the intuition that effectively compressing a sequence of data requires factoring out common structure, and factoring out common structure is critical for learning useful skills.

We measure the complexity of a skill decomposition as the amount of information required to communicate the sequence of skills that encode the pre-collected experience.
The latent variable model introduced in the previous section encodes each trajectory as a sequence of skills $\rvz_{1:T}$ and boundaries $\rvm_{1:T}$.
However, using the skill and termination policies, it is possible to recover each trajectory from only the skill variables at the boundary points: i.e., at the time steps $\{t \in [T] \, \mid \, \rvm_t = 1\}$.
For a prior on skills $p_{\rvz}$, an optimal code requires $-\log p_{\rvz}(\rvz_t)$ bits to send a skill $\rvz_t$~\citep[Chapter 5.2]{cover1999elements}.
Hence, the expected code length of communicating a trajectory $\mathbf{\tau}_{1:T}~=~\{(\rvx_1, \rva_1), \ldots, (\rvx_T, \rva_T)\}$ is:
\begin{equation*}
    \textsc{InfoCost}(\mathbf{\theta}; p_{\rvz}) = -\E_{\substack{\mathbf{\tau}_{1:T},\\\rvm_{1:T},\\\rvz_{1:T}}}\left[\sum_{t=1}^T \log p_{\rvz}(\rvz_t) \rvm_t\right],
\end{equation*}
where the expectation is under trajectories $\mathbf{\tau}_{1:T}$ from the pre-collected experience and sampling $\rvz_{1:T}$ from the skill posterior and $\rvm_{1:T}$ from the termination policy.
The choice of prior that minimizes the average code length is one that equals the empirical distribution of skills under the pre-collected experience~\citep[Chapter 5.3]{cover1999elements}:
\begin{align*}
    p_{\rvz}^{\star} \triangleq \argmin_{p_{\rvz}} \textsc{InfoCost}( \mathbf{\theta}; p_{\rvz}), 
\quad \text{where } p_\rvz^{\star}(\rvz) = \frac{1}{n_\text{s}} \E_{\substack{\mathbf{\tau}_{1:T},\\\rvm_{1:T},\\\rvz_{1:T}}}\left[\sum_{t=1}^T \delta(\rvz_t = \rvz)\rvm_t\right],
\end{align*}
and $n_\text{s} \triangleq \E_{\substack{\mathbf{\tau}_{1:T},\rvm_{1:T}}}\left[\sum_{t=1}^{T}\rvm_t\right]$.
In Appendix \ref{app:valid_density}, we show that this marginal $p_\rvz^{\star}$ is a proper density for both continuous and discrete $\rvz$.
Substituting in this optimal choice of prior, we can show that the code length can be expressed as the marginal entropy $\gH_{p_{\rvz}^{\star}}[\rvz]$ times the number of skills per trajectory $n_\text{s}$ (see Appendix \ref{app:rewrite} for proof):
{
\begin{align}\label{eqn:compression_objective}
\footnotesize
\gL_{\text{CL}}(\mathbf{\theta}) \triangleq \min_{p(\rvz_t)} \textsc{InfoCost}(\mathbf{\theta}; p(\rvz_t)) 
 = -\E_{\substack{\mathbf{\tau}_{1:T},\\\rvm_{1:T},\\\rvz_{1:T}}}\left[\sum_{t=1}^T \log p_{\rvz}^{\star}(\rvz_t) \rvm_t\right] = n_\text{s}\gH_{p_{\rvz}^{\star}}[\rvz].
\end{align}
}Intuitively, this is equal to the average code length of a skill multiplied by the the average number of skills per trajectory.
Note that compression is only beneficial if the model also achieves high likelihood of the data.
We capture this by solving the following constrained optimization problem:
\begin{equation}\label{eqn:constrained_optimization}
    \min_{\boldsymbol{\theta}}\,\, \gL_{\text{CL}}( \boldsymbol{\theta}) \quad
    \text{s.t.} \,\, \gL_{\text{ELBO}}(\boldsymbol{\theta}) \leq C,
\end{equation}
where $\gL_{\text{ELBO}}(\boldsymbol{\theta})$ a negated evidence lower bound on the log-likelihood (detailed in Appendix \ref{app:elbo}).
\paragraph{Remarks.}
We discuss a connection between this objective and variational inference in Appendix~\ref{app:prior}.
Additionally, while this work focuses on the RL setting, our objective generally applies to sequential modeling problems. We believe that it could be useful for many applications beyond option learning.

\subsection{Connections to Minimal Description Length}
Our approach closely relates to the minimum description length (MDL) principle~\citep{rissanen1978modeling}.
This principle equates \emph{learning} with \emph{finding regularity} in data, which can be used to \emph{compress} the data.
Informally, the best model is the one that encodes the data with the lowest description length.
Given a model $\mathbf{\theta}$ that encodes the data $\mathcal{D}$ into some message, one way to formalize the description length of the data $L(\mathcal{D})$ is with a crude two-part code~\citep{grunwald2004tutorial}.
This decomposes the description length of the data as the length of the message plus the length of the model:
\begin{align}\label{eqn:mdl}
    L(\gD) = \underbrace{L(\gD \mid \boldsymbol{\theta})}_{\text{message length}} + \underbrace{L(\boldsymbol{\theta})}_{\text{model length}}.
\end{align}
In our case, the message is the latent variables $\rvz_t$ at the boundary points $\{t \in [T] \, \mid \, \rvm_t = 1\}$, which can be decoded into the data $\mathcal{D}$ with the skill and termination policies (representing the model).
Hence, our approach can be seen as an instance of minimizing the description length $L(\mathcal{D})$.
Optimizing our objective in Equation~\ref{eqn:compression_objective} is equivalent to minimizing the message length $L(\gD \mid \boldsymbol{\theta})$.
While we do not directly attempt to minimize the model length term $L(\mathbf{\theta})$,
many works~\citep{neyshabur2014search, valle2018deep} indicate that deep learning has \emph{implicit regularization}, which biases optimization toward low complexity solutions without an explicit regularizer.
In general, computing the true description length or appropriate notion of complexity for neural networks is a tall order~\citep{neyshabur2017exploring, zhang2021understanding, Jiang*2020Fantastic};
however, there is a rich space of methods to be explored here and we therefore leave the extension of our approach to directly minimizing the model length term for future work.

\subsection{A Practical Implementation}\label{sec:practical}
\paragraph{Model.}
We instantiate our model by defining discrete skills $\rvz \in [K]$, state abstractions $\rvs \in \sR^d$, and binary boundary variables $\rvm \in \{0, 1\}$.
We parameterize all components of our model as neural networks.
See Appendix \ref{app:arch} for architecture details.

\paragraph{Optimization.} 
We apply Gumbel-softmax~\citep{jang2016categorical, maddison2016concrete} to optimize over the discrete random variables $\rvz$ and $\rvm$.
We rewrite the compression objective $\gL_\text{CL}$ as a product of $n_\text{s}$ and $\gH_{p_{\rvz}^{\star}}[\rvz_t]$ in Equation \ref{eqn:compression_objective} to improve the stability of optimization.
This rewriting allows computing a gradient in terms of a distribution over the skill variables $p^\star_\rvz$, rather than samples $\rvz_{t}$, which yields a more accurate finite sample gradient.
Empirically, this leads to stabler optimization and convergence to better solutions.

We approximate the optimal skill prior $p_{\rvz}^{\star}$ using our skill posterior as:
\begin{align*}\footnotesize
    p_{\rvz}^{\star}(\rvz) \approx\frac{1}{ \widehat{\E}_{\substack{\mathbf{\tau}_{1:T}, \\\rvm_{1:T}}} \left[\sum_{t = 1}^T \rvm_t\right]} \widehat{\E}_{\substack{\mathbf{\tau}_{1:T},\\\rvm_{1:T},\\\rvz_{1:T}}} \left[\sum_{t = 1}^T q_{\boldtheta}(\rvz_t = \rvz \mid \rvz_{t-1}, \rvm_t, \rvx_{1:T}, \rva_{1:T})\,\rvm_t\right] ,
\end{align*}
where $\widehat{\E}$ denotes the empirical expectation over minibatches of $\rvx_{1:T}, \rva_{1:T}$ from $\gD$ and sampling $\rvm_{1:T}$ from the termination policy.
We solve the constrained optimization problem (Equation \ref{eqn:constrained_optimization}) with dual gradient descent on the Lagrangian.
In Appendix \ref{app:algo}, we summarize the overall training procedure in Algorithm \ref{alg:love} and report details about the Lagrangian.

\paragraph{Enforcing a minimum skill length.} Though degenerate solutions such as skills that only take a single action score poorly in our compression objective, empirically, such solutions create local optima that are difficult to escape.
To avoid this, we mask the boundary variables during training to ensure that each skill $z_t$ operates for at least $T_\text{min} = 3$ time steps.
We find that when the skills are at least minimally temporally extended, optimization of the compression objective appears to be stabler and achieves better values.
We remove these masks at test time, when we learn a task with our skills.

\section{Using the Learned Skills for Hierarchical RL}
\label{sec:hrl}
We now describe how we quickly learn new tasks, given the skills learned from the pre-collected experience.
Overall, we simply augment the agent's action space with the learned skills and learn a new policy that can take either low-level actions or learned skills, similar to \citet{kipf2019compile}.
However, our compression objective may result in unused skills, since this decreases the encoding cost of the used skills.
Therefore, we first filter down to only the skills that are used to compress the pre-collected experience by selecting the skills where the marginal is over some threshold $\alpha$:
\begin{align*}
    \gZ = \big\{k \in [K] \,\,\mid\,\,  p_{\rvz}^{\star}(k) > \alpha \big\}.
\end{align*}
Then, on a new task with action space $\mathcal{A}$, we train an agent using the augmented action space $\gA^+ = \gA \, \cup \, \gZ$.
When the agent selects a skill $z \in \gZ$, we follow the procedure in Algorithm~\ref{alg:hrl} in the Appendix.
We take actions following the skill policy $\pi_{\mathbf{\theta}}(\rva_t \mid \rvz, \rvx_t)$ (lines 2--3), until the termination policy $q_{\mathbf{\theta}}(\rvm_t \mid \rvx_{1:t})$ outputs a termination $\rvm_t = 1$ (line 5).
At that point, the agent observes the next state $\rvx_{t + 1}$ and selects the next action or skill.

\section{A Didactic Experiment: Frame Prediction}\label{sec:didactic}

Before studying the RL setting, we first illustrate the effect of \ours in the simpler setting of sequential data without actions.
In this setting, we use the original VTA model, which partitions a sequence $\rvx_{1:T}$ into contiguous subsequences with the boundary variables $\rvm$ and labels each subsequence with a descriptor $\rvz$, as described in Section \ref{sec:prelim}.
Here, VTA's objective is to maximize the likelihood of the sequence $\rvx_{1:T}$, and our compression objective applies to this setting completely unmodified.
We compare \ours and only maximizing likelihood, i.e., VTA~\citep{kim2019variational}, by measuring how well the learned subsequences correspond to underlying patterns in the data.
To isolate the effect of the compression objective, we do not enforce a minimum skill length.

\begin{figure}
    \vspace{-7mm}
    \begin{minipage}{0.43\linewidth}
        \centering
        \includegraphics[width=\linewidth]{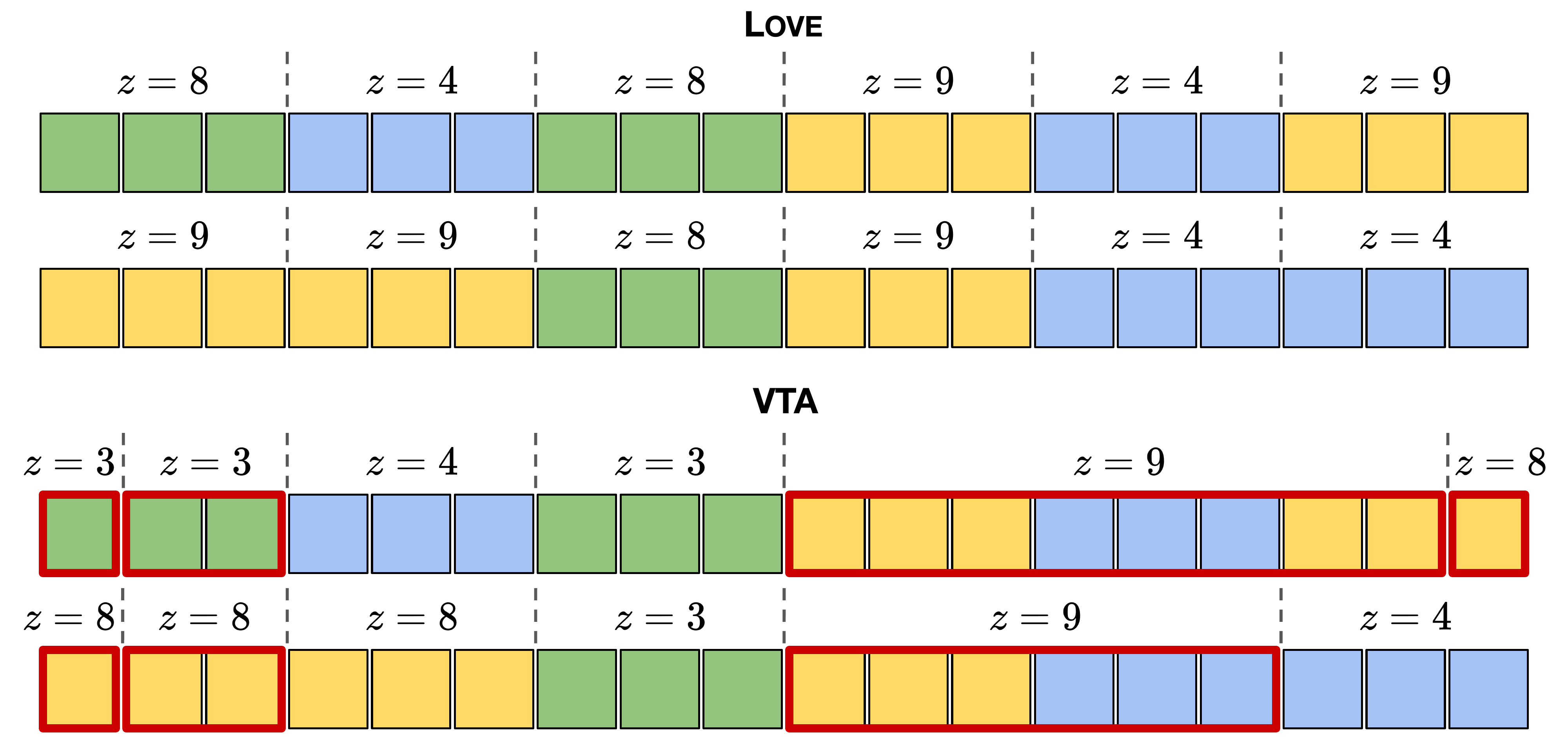}
\caption{\footnotesize Learned boundaries and descriptors on \emph{Simple Colors}.
        \ours recovers the patterns, while VTA learns boundaries that break up patterns or span multiple patterns (outlined in \textcolor{red}{red}).}
        \label{fig:color-simple}
\end{minipage}\hfill
    \begin{minipage}{0.54\linewidth}
\captionof{table}{\footnotesize Effect of \ours~on the \emph{Simple Colors} and \emph{Conditional Colors} datasets (5 seeds). All approaches achieve similar values for the likelihood objective, but \ours better recovers the correct boundaries.}
            \vspace{-2mm}
\center\small
            \resizebox{\linewidth}{!}{\begin{tabular}{ccccc}
                \toprule
                &\multicolumn{2}{c}{\textit{Simple Colors}}& \multicolumn{2}{c}{\textit{Conditional Colors}}\\
                 & \textbf{VTA} & \textbf{\ours} & \textbf{VTA} & \textbf{\ours}  \\
                 \cmidrule(lr){2-2} \cmidrule(lr){3-3} \cmidrule(lr){4-4} \cmidrule{5-5}
$\gL_{\text{ELBO}}$  & $2868 {\scriptstyle\, \pm \, 43}$ & $2838 {\scriptstyle\, \pm \, 19}$ & $2832 {\scriptstyle\, \pm \, 7.7}$ & $2827 {\scriptstyle\, \pm \, 1.9}$\\
\midrule
                Precision & $0.87 {\scriptstyle\, \pm \, 0.19}$ & $\mathbf{0.99} {\scriptstyle\, \pm \, 0.01}$ & $0.84 {\scriptstyle\, \pm \, 0.22}$ & $\mathbf{0.99} {\scriptstyle\, \pm \, 0.01}$\\
                Recall & $0.79 {\scriptstyle\, \pm \, 0.13}$ & $\mathbf{0.85} {\scriptstyle\, \pm \, 0.03}$ & $\mathbf{0.82} {\scriptstyle\, \pm \, 0.16}$ & $\mathbf{0.83} {\scriptstyle\, \pm \, 0.06}$ \\
                F1 & $0.82 {\scriptstyle\, \pm \, 0.13}$ & $\mathbf{0.91} {\scriptstyle\, \pm \, 0.02}$ & $0.83 {\scriptstyle\, \pm \, 0.19}$ & $\mathbf{0.90} {\scriptstyle\, \pm \, 0.03}$ \\
Code Length & $7.58 {\scriptstyle\, \pm \, 1.3}$ & $\mathbf{6.34} {\scriptstyle\, \pm \, 0.75}$ & $9.17 {\scriptstyle\, \pm \, 1.76}$ & $\mathbf{6.83} {\scriptstyle\, \pm \, 0.51}$ \\
                \bottomrule
            \end{tabular}
            }
\label{tab:synthetic-results}

\end{minipage}
    \vspace{-3mm}
\end{figure}

\paragraph{Dataset.}
We consider two simple datasets \emph{Simple Colors} and \emph{Conditional Colors}, which consist of sequences of $32 \times 32$ monochromatic frames, with repeating underlying patterns.
\emph{Simple Colors} consists of 3 patterns: 3 consecutive yellow frames, 3 consecutive blue frames and 3 consecutive green frames.
The dataset is generated by sampling these patterns with probability 0.4, 0.4, and 0.2 respectively and concatenating the results.
Each pattern is sampled independent of history.
The dataset is then divided into sequences of 6 patterns, equal to $6 \times 3 = 18$ frames.

In contrast to \emph{Simple Colors}, \emph{Conditional Colors} tests learning patterns that depend on history.
It consists of 4 patterns: the 3 patterns from \emph{Simple Colors} with an additional pattern of 3 consecutive purple frames.
The dataset is generated by uniformly sampling the patterns from \emph{Simple Colors}.
To make the current timestep pattern dependent on the previous timestep, the yellow frames are re-colored to purple, if the previous pattern was blue or yellow.
As in \emph{Simple Colors}, the dataset is divided into sequences of 6 patterns.

In both datasets, the optimal encoding strategy is to learn 3 subsequence descriptors (i.e., skills), one for each pattern in \emph{Simple Colors}.
In \emph{Conditional Colors}, the descriptor corresponding to yellow in \emph{Simple Colors} either outputs yellow or purple, depending on the history.
This encoding strategy achieves an expected code length of $6.32$ nats in \emph{Simple Colors} and $6.59$ nats in \emph{Conditional Colors}.

\paragraph{Results.}
We visualize the learned descriptors and boundaries in Figure \ref{fig:color-simple}.
\ours successfully segments the sequences into the patterns and assigns a consistent descriptor $\rvz$ to each pattern.
In contrast, despite the simple structure of this problem, VTA learns subsequences that span over multiple patterns or last for fewer timesteps than a single pattern.

Quantitatively, we measure the (1) precision, recall, and F1 scores of the boundary prediction, (2) the ELBO of the maximum likelihood objective and (3) the average code length $\mathcal{L}_\text{CL}$ in Table \ref{tab:synthetic-results}.
While both VTA and \ours achieve approximately the same negated ELBO (lower is better), \ours recovers the correct boundaries with much higher precision, recall, and F1 scores.
This illustrates that the underspecification problem can occur even in simple sequential data.
Additionally, we find that \ours achieves an encoding cost close to the optimal value.
One interesting, though rare, failure mode is that \ours sometimes even achieves a lower encoding cost than the optimal value by over-weighting the compression term and imperfectly reconstructing the data. In Appendix~\ref{app:sensitivity}, we conduct a ablation study on the weights of $\gL_{\text{CL}}$ in optimization.

\section{Experiments}\label{sec:experiments}
\begin{wrapfigure}[14]{r}{0.36\textwidth}
\vspace{-8mm}
\begin{center}
     \includegraphics[width=0.35\textwidth]{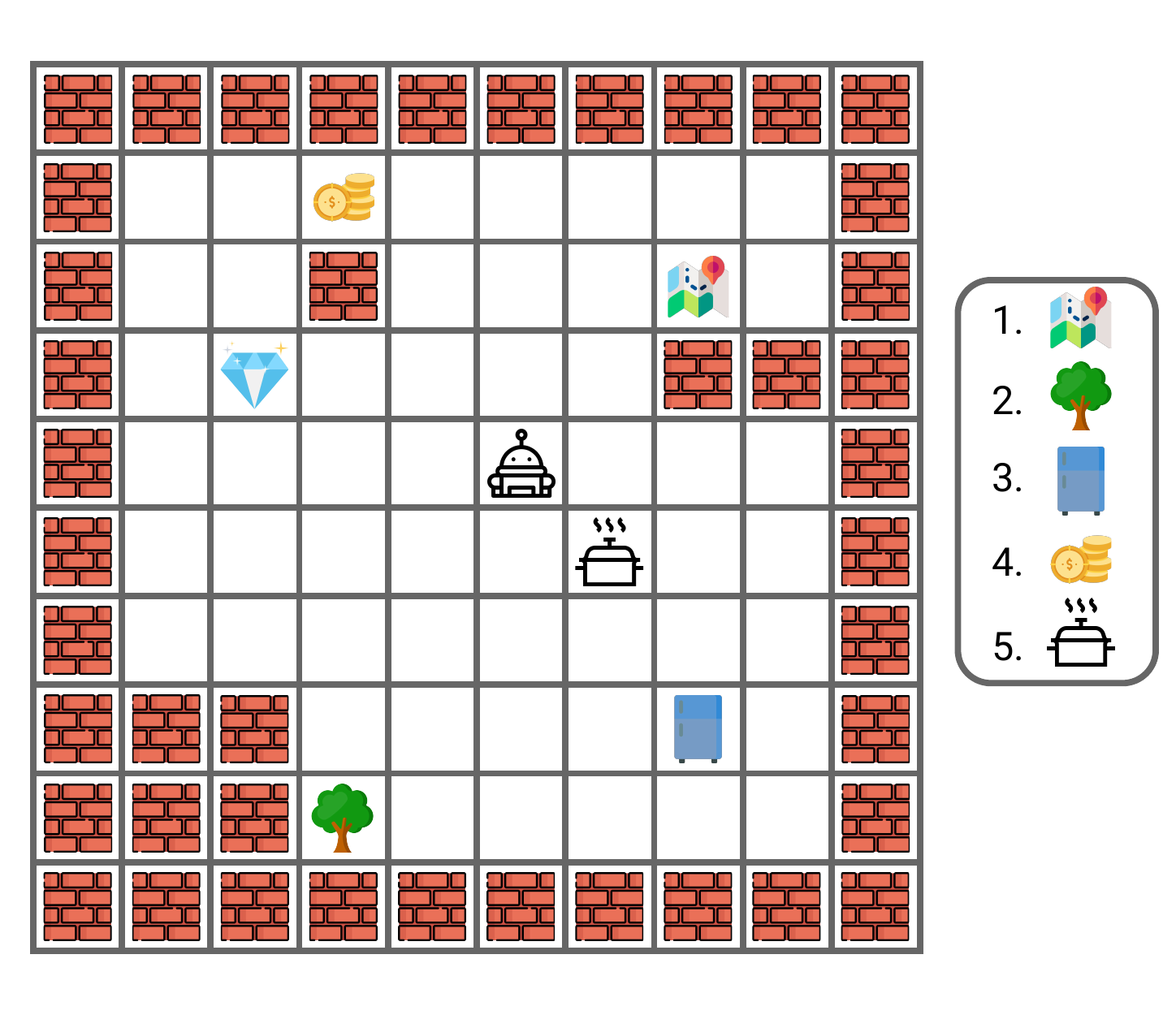}
\caption{Multi-task grid world instance from \citet{kipf2019compile}. The agent must pick up a sequence of objects in a specified order.}
     \label{fig:grid_world}
\end{center}
\end{wrapfigure}

We aim to answer three questions:
(1) Does \ours learn semantically meaningful skills?
(2) Do skills learned with \ours accelerate learning for new tasks?
(3) Can \ours scale to high-dim pixel observations?
To answer these questions, we learn skills from demonstrations and use these skills to learn new tasks, first on a 2D multi-task domain, and later on a 3D variant.

\paragraph{Multi-task domain.}
We consider the multi-task $10 \times 10$ grid world introduced by \citet{kipf2019compile}, a representative skill learning approach (Figure \ref{fig:grid_world}).
This domain features a challenging control problem that requires the agent to collect up to 5 objects, and we later add a perception challenge to it with the 3D variant.
We use our own custom implementation, since the code from \citet{kipf2019compile} is not publicly available.
In each task, 6 objects are randomly sampled from a set of $N_\text{obj} = 10$ different objects and placed at random positions in the grid.
Additionally, impassable walls are randomly added to the grid, and the agent is also placed at a random initial grid cell.
Each task also includes an instruction list of $N_\text{pick} = 3$ or $5$ objects that the agent must pick up in order.

Within each task, the agent's actions are to move up, down, left, right, or to pick up an item at the agent's grid cell.
The state consists of two parts: (1) a $10 \times 10 \times (N_\text{obj} + 2)$ grid observation, indicating if the agent, a wall, or any of the $N_\text{obj}$ different object types is present at each of the $10 \times 10$ grid cells; (2) the instruction corresponding to the next object to pick up, encoded as a one-hot integer.
Following \citet{kipf2019compile}, we consider two reward variants:
In the \emph{sparse reward} variant, the agent only receives $+1$ reward after picking up \emph{all} $N_\text{pick}$ objects in the correct order.
In the \emph{dense reward} variant, the agent receives $+1$ reward after picking up each specified object in the correct order.
Our dense reward variant is slightly harder than the variant in \citet{kipf2019compile}, which gives the agent $+1$ reward for picking up objects in \emph{any} order.
The agent receives $0$ reward in all other time steps.
The episode ends when the agent has picked up all the objects in the correct order, or after $50$ timesteps.

\paragraph{Pre-collected experience.} We follow the setting in \citet{kipf2019compile}.
We set the pre-collected experience to be $2000$ demonstrations generated via breadth-first search on randomly generated tasks with only $N_\text{pick} = 3$ and test if the agent can generalize to $N_\text{pick} = 5$ when learning a new task.
These demonstrations are not labeled with rewards and also \textit{do not} contain the instruction list observations.

\paragraph{Points of comparison.} To study our compression objective, we compare with two representatives of learning skills via the maximum likelihood objective:
\begin{itemize}[itemsep=0pt, topsep=0pt, leftmargin=10mm]
    \item VTA \citep{kim2019variational}, modified to handle interaction data as in Section \ref{sec:vta_actions}.
    \item Discovery of deep options (DDO) \citep{fox2017multi}.
    We implement DDO's maximum likelihood objective and graphical model on top of VTA's variational inference optimization procedure, instead of expectation-gradients used in the original paper.
\end{itemize}
For fairness, we also compare with variants of these that implement the minimum skill length constraint from Section \ref{sec:practical}.
CompILE~\citep{kipf2019compile} is another approach that learns skill by maximizing likelihood and was introduced in the same paper as this grid world.
Because the implementation is unavailable, we do not compare with it.
However, we note that CompILE requires additional supervision that \ours, VTA, and DDO do not: namely, it requires knowing how many skills each demonstration is composed of.
This supervision can be challenging to obtain without already knowing what the skills should be.

Since latent variable models are prone to local optima~\citep{kumar2010self}, it is common to learn such models with multiple restarts~\citep{murphy2012machine}.
We therefore run each method with 3 random initializations
and pick the best model according to the compression objective $\mathcal{L}_\text{CL}$ for \ours and according to the ELBO of the maximum likelihood objective $\mathcal{L}_\text{ELBO}$ for the others.
Notably, this does not require any additional demonstrations or experience.

We begin by analyzing the skills learned from the pre-collected experience before analyzing the utility of these skills for learning downstream tasks in the next sections.

\subsection{Analyzing the Learned Skills}\label{sec:analyzing_skills}

\begin{wraptable}[11]{r}{0.55\textwidth}
    \vspace{-4mm}
    \caption{\footnotesize Precision and recall of outputting the boundaries between picking up different objects. Only \ours learns skills that move to and pick up an object.}
    \vspace{-3mm}
    \center\small
    \begin{tabular}{cccc}
        \toprule
        & Precision & Recall & F1 \\ \midrule
DDO \citep{fox2017multi} & 0.19 & \textbf{1} & 0.32 \\
        DDO + min. skill length & 0.26 & 0.53 & 0.35 \\
        VTA \citep{kim2019variational} & 0.19 & \textbf{0.99} & 0.32 \\
        VTA + min. skill length & 0.27 & 0.53 & 0.36\\
        \ours (ours) & \textbf{0.90} & 0.94 & \textbf{0.92} \\
        \bottomrule
    \end{tabular}
    \label{tab:grid_prec_recall}
\end{wraptable}
We analyze the learned skills by comparing them to a natural decomposition that partitions the demonstration in $N_\text{pick}$ sequences of moving to and picking up an object.
Specifically, we measure the precision and recall of each method in outputting the correct boundaries of these $N_\text{pick}$ sequences.

\begin{figure*}[t]
\begin{center}
\vspace{-7mm}
    \centerline{\includegraphics[width=\linewidth]{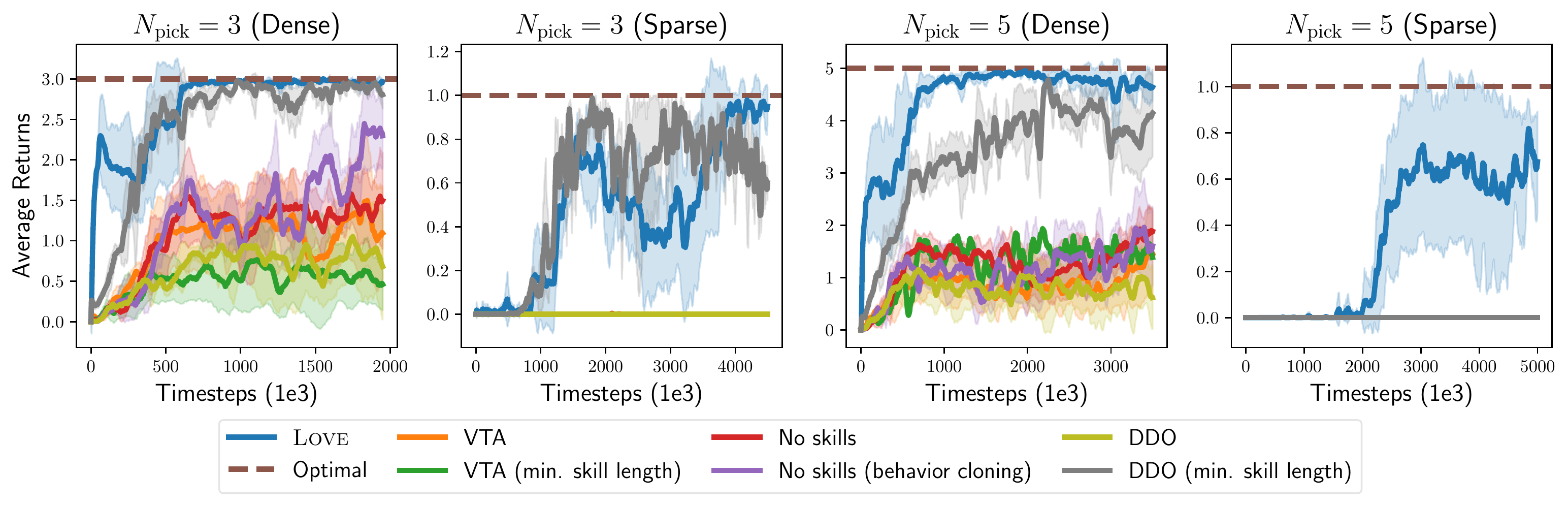}}
\vspace{-1mm}
\caption{\textbf{Sample efficient learning.} We plot returns vs. timesteps of environment interactions for 4 settings in the grid world with 1-stddev error bars (5 seeds). Only \ours achieves high returns across all 4 settings.}
\label{fig:grid_returns}
\end{center}
\vspace{-5mm}
\end{figure*}

We find that only \ours recovers the correct boundaries with both high precision and recall (Table \ref{tab:grid_prec_recall}).
Visually examining the skills learned with \ours shows that each skill moves to an object and picks it up.
We visualize \ours's skills in Appendix \ref{app:visualization}.
In contrast, maximizing likelihood with either DDO or VTA learns degenerate skills that output only a single action, leading to high recall, but low precision.
Adding the minimum skill length constraint to these approaches prevents such degenerate solutions, but still does not lead to the correct boundaries.
Skills learned by VTA with the minimum skill length constraint exhibit no apparent structure, but skills learned by DDO with the minimum skill length constraint appear to move toward objects, though they terminate at seemingly random places.
These results further illustrate the underspecification problem.
All approaches successfully learn to reconstruct the demonstrations and achieve high likelihood, but only \ours and DDO with the minimum skill length constraint learn skills that appear to be semantically meaningful.

\subsection{Learning New Tasks}
Next, we evaluate the utility of the skills for learning new tasks.
To evaluate the skills, we sample a new task in each of 4 settings: with sparse or dense rewards and with $N_\text{pick} = 3$ or $N_\text{pick} = 5$.
Then we learn the new tasks following the procedure in Section \ref{sec:hrl}:
we augment the action space with the skills and train an agent over the augmented action space.
To understand the impact of skills, we also compare with a low-level policy that learns these tasks using only the original action space, and the same low-level policy that incorporates the demonstrations by pre-training with behavioral cloning.
We parametrize the policy for all approaches with dueling double deep Q-networks \citep{mnih2015human, wang2016dueling, van2016deep} with $\epsilon$-greedy exploration.
We report the returns of evaluation episodes with $\epsilon = 0$, which are run every 10 episodes.
We report returns averaged over 5 seeds.
See Appendix \ref{app:hrl-hparam} for full details.

\paragraph{Results.}
Overall, \ours learns new tasks across all 4 settings comparably to or faster than both skill methods based on maximizing likelihood and incorporating demonstrations via behavior cloning (Figure \ref{fig:grid_returns}).
More specifically, when $N_\text{pick} = 3$, DDO with the min. skill length constraint performs comparably to \ours, despite not segmenting the demonstrations into sequences of moving to and picking up objects.
We observe that while a single DDO skill does not move to and pick up an object, the skills consistently move toward objects, which likely helps with exploration.
Imitating the demonstrations with behavior cloning also accelerates learning when $N_\text{pick} = 3$ with dense rewards, though not as much as \ours or DDO, and yields insignificant benefit in all other settings.
Skill learning with VTA yields no benefit over no skill learning at all.

Recall that $N_\text{pick} = 3$ in all of the demonstrations.
\ours learns new tasks in the generalization setting of $N_\text{pick} = 5$ much faster than the other methods.
With dense rewards, DDO (min. skill length) also eventually solves the task, but requires over $8\times$ more timesteps.
The other approaches learn to pick up some objects, but never achieve optimal returns of 5.
With sparse rewards, \ours is the only approach that achieves high returns, while all other approaches achieve 0 returns.
This likely occurs because this setting creates a challenging exploration problem, so exploring with low-level actions or skills learned with VTA or DDO rarely achieves reward.
By contrast, exploring with skills learned with \ours achieves rewards much more frequently, which enables \ours to ultimately solve the task.

\subsection{Scaling to High-dimensional Observations}

Prior works in the offline multi-task setting have learned skills from low-dimensional states and then transferred them to high-dim pixel observations~\citep{rao2021learning}, or have learned hierarchical models from pixel observations~\citep{kim2019variational, fox2017multi}.
However, to the best of our knowledge, prior works in this setting have not directly learned skills from high-dim pixel observations, only from low-dimensional states.
Hence, in this section, we test if \ours can scale up to high-dim pixel observations by considering a challenging 3D image-based variant of the multi-task domain above, illustrated in Figure~\ref{fig:3dworld}.
The goal is still to pick up $N_{\text{pick}}$ objects in the correct order, where objects are blocks of different colors.
Each task includes 3 objects randomly sampled from a set of $N_\text{obj} = 6$ different colors.
The observations are $400 \times 60 \times 3$ egocentric panoramic RGB arrays.
The actions are to move forward / backward, turn left / right, and pick up.

\begin{figure}
    \begin{minipage}{0.35\linewidth}
    \centerline{\includegraphics[width=0.95\columnwidth]{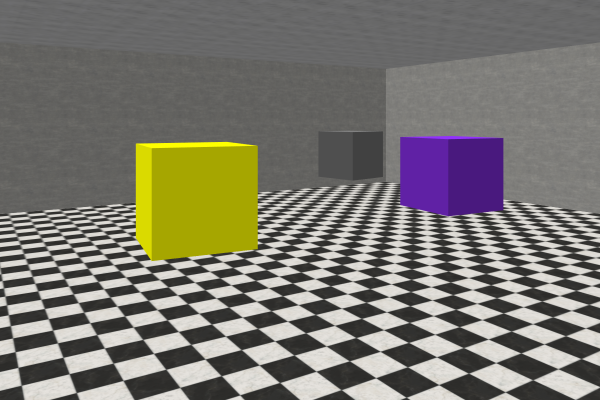}}
    \vspace{2.2mm}
    \caption{3D visual multi-task domain.}
    \label{fig:3dworld}
\end{minipage}\hfill
    \begin{minipage}{0.6\linewidth}
    \vspace{2mm}
    \centerline{\includegraphics[width=0.79\columnwidth]{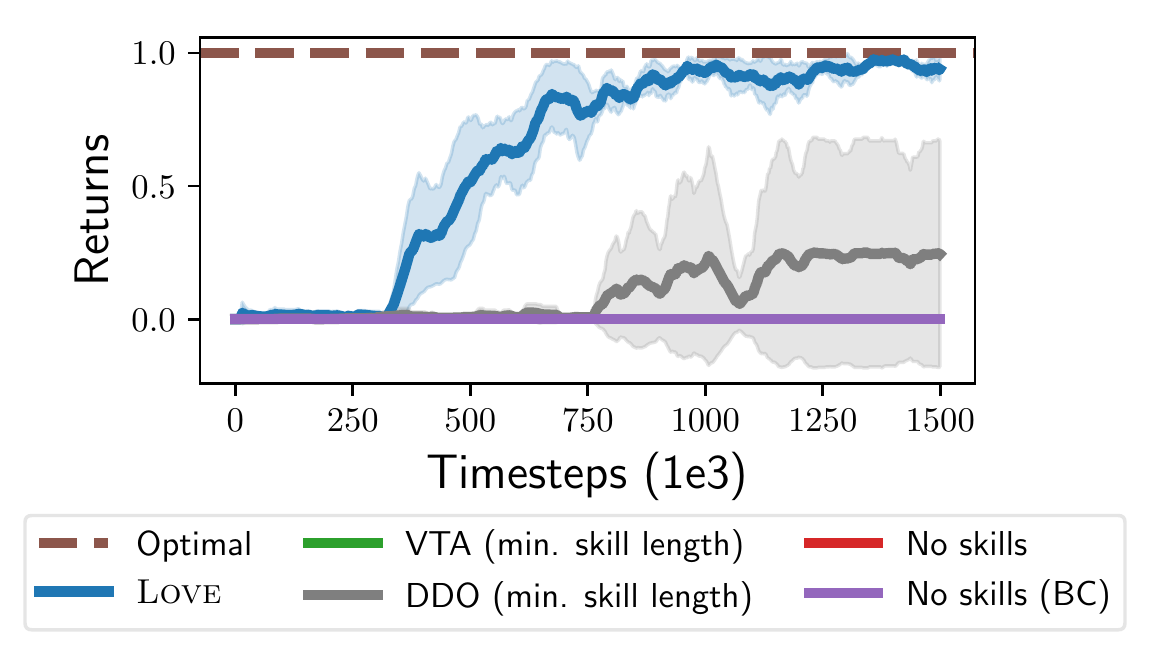}}
    \vspace{-2.2mm}
    \caption{Returns on the 3D visual multi-task domain with 1-stddev error bars (5 seeds). Only \ours achieves near-optimal returns.}
    \label{fig:3d_results}
\end{minipage}
\end{figure}

We consider only the harder \emph{sparse reward} and generalization setting, where the agent must pick up $N_{\text{pick}} = 4$ objects in the correct order, after receiving 2,000 pre-collected demonstrations of the approximate shortest path of picking up $N_{\text{pick}} = 2$ objects in the correct order.
We use the same hyperparameters as in multi-task domain and only change the observation encoder and action decoder (full details in Appendix~\ref{app:3d_arch}).
Figure~\ref{fig:3d_results} shows the results.
Again, \ours learns skills that each navigate to and pick up an object, which quickly achieves optimal returns, indicating its ability to scale to high-dim observations.
In contrast, the other approaches perform much worse on pixel observations than the previous low-dim observations.
DDO also makes progress on the task, but requires far more samples, as it again learns skills that only operate for a few timesteps, though they move toward the objects.
All other approaches fail to learn at all, including the variants of VTA and DDO without the min. skill length, which are not plotted.

\section{Conclusion}\label{sec:conclusion}
We started by highlighting the underspecification problem: maximizing likelihood does not necessarily yield skills that are useful for learning new tasks.
To overcome this problem, we drew a connection between skill learning and compression and proposed a new objective for skill learning via compression and a differentiable model for optimizing our objective.
Empirically, we found that the underspecification problem occurs even on simple tasks and learning skills with our objective allows learning new tasks much faster than learning skills by maximizing likelihood.

Still, important future work remains. \ours applies when there are useful and consistent structures that can be extracted from multiple trajectories. This is often present in multi-task demonstrations, which solve related tasks in similar ways. However, an open challenge for adapting to general offline data like D4RL~\citep{fu2020d4rl}, is to ensure that the learned skills do not overfit to potentially noisy or unhelpful behaviors often present in offline data.
In addition, we showed a connection between skill learning and compression by minimizing the description length of a crude two-part code.
However, we only proposed a way to optimize the message length term and not the model length.
Completing this connection by accounting for model length may therefore be an promising direction for future work.

\paragraph{Reproducibility.} 
Our code is publicly available at \url{https://github.com/yidingjiang/love}.

\paragraph{Acknowledgements.}
{
We would like to thank Karol Hausman, Abhishek Gupta, Archit Sharma, Sergey Levine, Annie Xie, Zhe Dong, Samuel Sokota for valuable discussions during the course of this work, and Victor Akinwande, Christina Baek, Swaminathan Gurumurthy for comments on the draft.
We would also like to thank the anonymous reviewers for their valuable feedback.
Last, but certainly not least, YJ and EZL thank the beautiful beach of Kona for initiating this project.

YJ is supported by funding
from the Bosch Center for Artificial Intelligence.
EZL is supported by a National Science Foundation Graduate Research Fellowship under Grant No. DGE-1656518.
CF is a Fellow in the CIFAR Learning in Machines and Brains Program.
This work was also supported in part by the ONR grant N00014-21-1-2685, Google, and Intel.
Icons in this work were made by ThoseIcons, FreePik, VectorsMarket, from FlatIcon.

}

\begin{center}
    \textit{\ours is a many-splendored thing. \ours lifts us up where we belong. All you need is \ours.} \\
    --- Moulin Rouge (Elephant Love Song Medley)
\end{center}

\bibliographystyle{plainnat}
\bibliography{example_paper}

\begin{thebibliography}{95}
\providecommand{\natexlab}[1]{#1}
\providecommand{\url}[1]{\texttt{#1}}
\expandafter\ifx\csname urlstyle\endcsname\relax
  \providecommand{\doi}[1]{doi: #1}\else
  \providecommand{\doi}{doi: \begingroup \urlstyle{rm}\Url}\fi

\bibitem[Agrawal et~al.(2016)Agrawal, Nair, Abbeel, Malik, and
  Levine]{agrawal2016learning}
Pulkit Agrawal, Ashvin Nair, Pieter Abbeel, Jitendra Malik, and Sergey Levine.
\newblock Learning to poke by poking: Experiential learning of intuitive
  physics.
\newblock \emph{arXiv preprint arXiv:1606.07419}, 2016.

\bibitem[Ajay et~al.(2020)Ajay, Kumar, Agrawal, Levine, and
  Nachum]{ajay2020opal}
Anurag Ajay, Aviral Kumar, Pulkit Agrawal, Sergey Levine, and Ofir Nachum.
\newblock Opal: Offline primitive discovery for accelerating offline
  reinforcement learning.
\newblock \emph{arXiv preprint arXiv:2010.13611}, 2020.

\bibitem[Andreas et~al.(2017)Andreas, Klein, and Levine]{andreas2017modular}
Jacob Andreas, Dan Klein, and Sergey Levine.
\newblock Modular multitask reinforcement learning with policy sketches.
\newblock In \emph{International Conference on Machine Learning}, pages
  166--175. PMLR, 2017.

\bibitem[Bacon et~al.(2017)Bacon, Harb, and Precup]{bacon2017option}
Pierre-Luc Bacon, Jean Harb, and Doina Precup.
\newblock The option-critic architecture.
\newblock In \emph{Proceedings of the AAAI Conference on Artificial
  Intelligence}, volume~31, 2017.

\bibitem[Bagaria et~al.(2021)Bagaria, Senthil, and Konidaris]{bagaria2021skill}
Akhil Bagaria, Jason~K Senthil, and George Konidaris.
\newblock Skill discovery for exploration and planning using deep skill graphs.
\newblock In \emph{International Conference on Machine Learning}, pages
  521--531. PMLR, 2021.

\bibitem[Barto and Mahadevan(2003)]{barto2003recent}
Andrew~G Barto and Sridhar Mahadevan.
\newblock Recent advances in hierarchical reinforcement learning.
\newblock \emph{Discrete event dynamic systems}, 13\penalty0 (1):\penalty0
  41--77, 2003.

\bibitem[Bera et~al.(2019)Bera, Goecks, Gremillion, Valasek, and
  Waytowich]{bera2019podnet}
Ritwik Bera, Vinicius~G Goecks, Gregory~M Gremillion, John Valasek, and
  Nicholas~R Waytowich.
\newblock Podnet: A neural network for discovery of plannable options.
\newblock \emph{arXiv preprint arXiv:1911.00171}, 2019.

\bibitem[Berner et~al.(2019)Berner, Brockman, Chan, Cheung, Debiak, Dennison,
  Farhi, Fischer, Hashme, Hesse, et~al.]{berner2019dota}
Christopher Berner, Greg Brockman, Brooke Chan, Vicki Cheung, Przemys{\l}aw
  Debiak, Christy Dennison, David Farhi, Quirin Fischer, Shariq Hashme, Chris
  Hesse, et~al.
\newblock Dota 2 with large scale deep reinforcement learning.
\newblock \emph{arXiv preprint arXiv:1912.06680}, 2019.

\bibitem[Blei and Moreno(2001)]{blei2001topic}
David~M Blei and Pedro~J Moreno.
\newblock Topic segmentation with an aspect hidden markov model.
\newblock In \emph{Proceedings of the 24th annual international ACM SIGIR
  conference on Research and development in information retrieval}, pages
  343--348, 2001.

\bibitem[Chung et~al.(2014)Chung, Gulcehre, Cho, and
  Bengio]{chung2014empirical}
Junyoung Chung, Caglar Gulcehre, KyungHyun Cho, and Yoshua Bengio.
\newblock Empirical evaluation of gated recurrent neural networks on sequence
  modeling.
\newblock \emph{arXiv preprint arXiv:1412.3555}, 2014.

\bibitem[Clevert et~al.(2015)Clevert, Unterthiner, and
  Hochreiter]{clevert2015fast}
Djork-Arn{\'e} Clevert, Thomas Unterthiner, and Sepp Hochreiter.
\newblock Fast and accurate deep network learning by exponential linear units
  (elus).
\newblock \emph{arXiv preprint arXiv:1511.07289}, 2015.

\bibitem[Cover(1999)]{cover1999elements}
Thomas~M Cover.
\newblock \emph{Elements of information theory}.
\newblock John Wiley \& Sons, 1999.

\bibitem[Dai et~al.(2016)Dai, Dai, Zhang, Li, and Song]{dai2016recurrent}
Hanjun Dai, Bo~Dai, Yan-Ming Zhang, Shuang Li, and Le~Song.
\newblock Recurrent hidden semi-markov model.
\newblock 2016.

\bibitem[Dasari et~al.(2019)Dasari, Ebert, Tian, Nair, Bucher, Schmeckpeper,
  Singh, Levine, and Finn]{dasari2019robonet}
Sudeep Dasari, Frederik Ebert, Stephen Tian, Suraj Nair, Bernadette Bucher,
  Karl Schmeckpeper, Siddharth Singh, Sergey Levine, and Chelsea Finn.
\newblock Robonet: Large-scale multi-robot learning.
\newblock \emph{arXiv preprint arXiv:1910.11215}, 2019.

\bibitem[Dong et~al.(2020)Dong, Seybold, Murphy, and Bui]{dong2020collapsed}
Zhe Dong, Bryan Seybold, Kevin Murphy, and Hung Bui.
\newblock Collapsed amortized variational inference for switching nonlinear
  dynamical systems.
\newblock In \emph{International Conference on Machine Learning}, pages
  2638--2647. PMLR, 2020.

\bibitem[Duan et~al.(2016)Duan, Schulman, Chen, Bartlett, Sutskever, and
  Abbeel]{duan2016rl}
Yan Duan, John Schulman, Xi~Chen, Peter~L Bartlett, Ilya Sutskever, and Pieter
  Abbeel.
\newblock Rl$^2$: Fast reinforcement learning via slow reinforcement learning.
\newblock \emph{arXiv preprint arXiv:1611.02779}, 2016.

\bibitem[Eysenbach et~al.(2019)Eysenbach, Gupta, Ibarz, and
  Levine]{eysenbach2018diversity}
Benjamin Eysenbach, Abhishek Gupta, Julian Ibarz, and Sergey Levine.
\newblock Diversity is all you need: Learning skills without a reward function.
\newblock In \emph{International Conference on Learning Representations}, 2019.
\newblock URL \url{https://openreview.net/forum?id=SJx63jRqFm}.

\bibitem[Finn et~al.(2016)Finn, Goodfellow, and Levine]{finn2016unsupervised}
Chelsea Finn, Ian Goodfellow, and Sergey Levine.
\newblock Unsupervised learning for physical interaction through video
  prediction.
\newblock \emph{Advances in neural information processing systems},
  29:\penalty0 64--72, 2016.

\bibitem[Florensa et~al.(2017)Florensa, Duan, and
  Abbeel]{florensa2017stochastic}
Carlos Florensa, Yan Duan, and Pieter Abbeel.
\newblock Stochastic neural networks for hierarchical reinforcement learning.
\newblock \emph{arXiv preprint arXiv:1704.03012}, 2017.

\bibitem[Fox et~al.(2011)Fox, Sudderth, Jordan, and Willsky]{fox2011sticky}
Emily~B Fox, Erik~B Sudderth, Michael~I Jordan, and Alan~S Willsky.
\newblock A sticky hdp-hmm with application to speaker diarization.
\newblock \emph{The Annals of Applied Statistics}, pages 1020--1056, 2011.

\bibitem[Fox et~al.(2017)Fox, Krishnan, Stoica, and Goldberg]{fox2017multi}
Roy Fox, Sanjay Krishnan, Ion Stoica, and Ken Goldberg.
\newblock Multi-level discovery of deep options.
\newblock \emph{arXiv preprint arXiv:1703.08294}, 2017.

\bibitem[Fu et~al.(2020)Fu, Kumar, Nachum, Tucker, and Levine]{fu2020d4rl}
Justin Fu, Aviral Kumar, Ofir Nachum, George Tucker, and Sergey Levine.
\newblock D4rl: Datasets for deep data-driven reinforcement learning.
\newblock \emph{arXiv preprint arXiv:2004.07219}, 2020.

\bibitem[Garcia et~al.(2017)Garcia, da~Silva, and
  Thomas]{garcia2017compression}
Francisco~M Garcia, Bruno~C da~Silva, and Philip~S Thomas.
\newblock A compression-inspired framework for macro discovery.
\newblock \emph{arXiv preprint arXiv:1711.09048}, 2017.

\bibitem[Ghahramani and Hinton(2000)]{ghahramani2000variational}
Zoubin Ghahramani and Geoffrey~E Hinton.
\newblock Variational learning for switching state-space models.
\newblock \emph{Neural computation}, 12\penalty0 (4):\penalty0 831--864, 2000.

\bibitem[Goyal et~al.(2019)Goyal, Sodhani, Binas, Peng, Levine, and
  Bengio]{goyal2019reinforcement}
Anirudh Goyal, Shagun Sodhani, Jonathan Binas, Xue~Bin Peng, Sergey Levine, and
  Yoshua Bengio.
\newblock Reinforcement learning with competitive ensembles of
  information-constrained primitives.
\newblock \emph{arXiv preprint arXiv:1906.10667}, 2019.

\bibitem[Gregor et~al.(2016)Gregor, Rezende, and
  Wierstra]{gregor2016variational}
Karol Gregor, Danilo~Jimenez Rezende, and Daan Wierstra.
\newblock Variational intrinsic control.
\newblock \emph{arXiv preprint arXiv:1611.07507}, 2016.

\bibitem[Grunwald(2004)]{grunwald2004tutorial}
Peter Grunwald.
\newblock A tutorial introduction to the minimum description length principle.
\newblock \emph{arXiv preprint math/0406077}, 2004.

\bibitem[Harb et~al.(2018)Harb, Bacon, Klissarov, and Precup]{harb2018waiting}
Jean Harb, Pierre-Luc Bacon, Martin Klissarov, and Doina Precup.
\newblock When waiting is not an option: Learning options with a deliberation
  cost.
\newblock In \emph{Proceedings of the AAAI Conference on Artificial
  Intelligence}, volume~32, 2018.

\bibitem[Harrison et~al.(2019)Harrison, Sharma, Finn, and
  Pavone]{harrison2019continuous}
James Harrison, Apoorva Sharma, Chelsea Finn, and Marco Pavone.
\newblock Continuous meta-learning without tasks.
\newblock \emph{arXiv preprint arXiv:1912.08866}, 2019.

\bibitem[Hausman et~al.(2018)Hausman, Springenberg, Wang, Heess, and
  Riedmiller]{hausman2018learning}
Karol Hausman, Jost~Tobias Springenberg, Ziyu Wang, Nicolas Heess, and Martin
  Riedmiller.
\newblock Learning an embedding space for transferable robot skills.
\newblock In \emph{International Conference on Learning Representations}, 2018.
\newblock URL \url{https://openreview.net/forum?id=rk07ZXZRb}.

\bibitem[Heess et~al.(2017)Heess, TB, Sriram, Lemmon, Merel, Wayne, Tassa,
  Erez, Wang, Eslami, et~al.]{heess2017emergence}
Nicolas Heess, Dhruva TB, Srinivasan Sriram, Jay Lemmon, Josh Merel, Greg
  Wayne, Yuval Tassa, Tom Erez, Ziyu Wang, SM~Eslami, et~al.
\newblock Emergence of locomotion behaviours in rich environments.
\newblock \emph{arXiv preprint arXiv:1707.02286}, 2017.

\bibitem[Hinton and Zemel(1994)]{hinton1994autoencoders}
Geoffrey~E Hinton and Richard~S Zemel.
\newblock Autoencoders, minimum description length, and helmholtz free energy.
\newblock \emph{Advances in neural information processing systems}, 6:\penalty0
  3--10, 1994.

\bibitem[Honkela and Valpola(2004)]{honkela2004variational}
Antti Honkela and Harri Valpola.
\newblock Variational learning and bits-back coding: an information-theoretic
  view to bayesian learning.
\newblock \emph{IEEE transactions on Neural Networks}, 15\penalty0
  (4):\penalty0 800--810, 2004.

\bibitem[Ioffe and Szegedy(2015)]{ioffe2015batch}
Sergey Ioffe and Christian Szegedy.
\newblock Batch normalization: Accelerating deep network training by reducing
  internal covariate shift.
\newblock In \emph{International conference on machine learning}, pages
  448--456. PMLR, 2015.

\bibitem[Jang et~al.(2016)Jang, Gu, and Poole]{jang2016categorical}
Eric Jang, Shixiang Gu, and Ben Poole.
\newblock Categorical reparameterization with gumbel-softmax.
\newblock \emph{arXiv preprint arXiv:1611.01144}, 2016.

\bibitem[Jiang et~al.(2019)Jiang, Gu, Murphy, and Finn]{jiang2019language}
Yiding Jiang, Shixiang Gu, Kevin Murphy, and Chelsea Finn.
\newblock Language as an abstraction for hierarchical deep reinforcement
  learning.
\newblock \emph{arXiv preprint arXiv:1906.07343}, 2019.

\bibitem[Jiang* et~al.(2020)Jiang*, Neyshabur*, Mobahi, Krishnan, and
  Bengio]{Jiang*2020Fantastic}
Yiding Jiang*, Behnam Neyshabur*, Hossein Mobahi, Dilip Krishnan, and Samy
  Bengio.
\newblock Fantastic generalization measures and where to find them.
\newblock In \emph{International Conference on Learning Representations}, 2020.
\newblock URL \url{https://openreview.net/forum?id=SJgIPJBFvH}.

\bibitem[Johnson et~al.(2016)Johnson, Duvenaud, Wiltschko, Adams, and
  Datta]{johnson2016composing}
Matthew~J Johnson, David~K Duvenaud, Alex Wiltschko, Ryan~P Adams, and
  Sandeep~R Datta.
\newblock Composing graphical models with neural networks for structured
  representations and fast inference.
\newblock \emph{Advances in neural information processing systems},
  29:\penalty0 2946--2954, 2016.

\bibitem[Kidambi et~al.(2020)Kidambi, Rajeswaran, Netrapalli, and
  Joachims]{kidambi2020morel}
Rahul Kidambi, Aravind Rajeswaran, Praneeth Netrapalli, and Thorsten Joachims.
\newblock Morel: Model-based offline reinforcement learning.
\newblock \emph{arXiv preprint arXiv:2005.05951}, 2020.

\bibitem[Kim et~al.(2019)Kim, Ahn, and Bengio]{kim2019variational}
Taesup Kim, Sungjin Ahn, and Yoshua Bengio.
\newblock Variational temporal abstraction.
\newblock \emph{Advances in Neural Information Processing Systems},
  32:\penalty0 11570--11579, 2019.

\bibitem[Kingma and Ba(2014)]{kingma2014adam}
Diederik~P Kingma and Jimmy Ba.
\newblock Adam: A method for stochastic optimization.
\newblock \emph{arXiv preprint arXiv:1412.6980}, 2014.

\bibitem[Kingma and Welling(2013)]{kingma2013auto}
Diederik~P Kingma and Max Welling.
\newblock Auto-encoding variational bayes.
\newblock \emph{arXiv preprint arXiv:1312.6114}, 2013.

\bibitem[Kipf et~al.(2019)Kipf, Li, Dai, Zambaldi, Sanchez-Gonzalez,
  Grefenstette, Kohli, and Battaglia]{kipf2019compile}
Thomas Kipf, Yujia Li, Hanjun Dai, Vinicius Zambaldi, Alvaro Sanchez-Gonzalez,
  Edward Grefenstette, Pushmeet Kohli, and Peter Battaglia.
\newblock Compile: Compositional imitation learning and execution.
\newblock In \emph{International Conference on Machine Learning}, pages
  3418--3428. PMLR, 2019.

\bibitem[Krishnan et~al.(2017)Krishnan, Fox, Stoica, and
  Goldberg]{krishnan2017ddco}
Sanjay Krishnan, Roy Fox, Ion Stoica, and Ken Goldberg.
\newblock Ddco: Discovery of deep continuous options for robot learning from
  demonstrations.
\newblock In \emph{Conference on robot learning}, pages 418--437. PMLR, 2017.

\bibitem[Kulkarni et~al.(2016)Kulkarni, Narasimhan, Saeedi, and
  Tenenbaum]{kulkarni2016hierarchical}
Tejas~D Kulkarni, Karthik Narasimhan, Ardavan Saeedi, and Josh Tenenbaum.
\newblock Hierarchical deep reinforcement learning: Integrating temporal
  abstraction and intrinsic motivation.
\newblock \emph{Advances in neural information processing systems},
  29:\penalty0 3675--3683, 2016.

\bibitem[Kumar et~al.(2010)Kumar, Packer, and Koller]{kumar2010self}
M~Pawan Kumar, Benjamin Packer, and Daphne Koller.
\newblock Self-paced learning for latent variable models.
\newblock In \emph{NIPS}, volume~1, page~2, 2010.

\bibitem[Lee and Seo(2020)]{lee2020learning}
Sang-Hyun Lee and Seung-Woo Seo.
\newblock Learning compound tasks without task-specific knowledge via imitation
  and self-supervised learning.
\newblock In \emph{International Conference on Machine Learning}, pages
  5747--5756. PMLR, 2020.

\bibitem[Lee et~al.(2018)Lee, Sun, Somasundaram, Hu, and Lim]{lee2018composing}
Youngwoon Lee, Shao-Hua Sun, Sriram Somasundaram, Edward~S Hu, and Joseph~J
  Lim.
\newblock Composing complex skills by learning transition policies.
\newblock In \emph{International Conference on Learning Representations}, 2018.

\bibitem[Linderman et~al.(2017)Linderman, Johnson, Miller, Adams, Blei, and
  Paninski]{linderman2017bayesian}
Scott Linderman, Matthew Johnson, Andrew Miller, Ryan Adams, David Blei, and
  Liam Paninski.
\newblock Bayesian learning and inference in recurrent switching linear
  dynamical systems.
\newblock In \emph{Artificial Intelligence and Statistics}, pages 914--922.
  PMLR, 2017.

\bibitem[Linderman et~al.(2016)Linderman, Miller, Adams, Blei, Paninski, and
  Johnson]{linderman2016recurrent}
Scott~W Linderman, Andrew~C Miller, Ryan~P Adams, David~M Blei, Liam Paninski,
  and Matthew~J Johnson.
\newblock Recurrent switching linear dynamical systems.
\newblock \emph{arXiv preprint arXiv:1610.08466}, 2016.

\bibitem[Liu et~al.(2021{\natexlab{a}})Liu, Raghunathan, Liang, and
  Finn]{liu2021decoupling}
Evan~Z Liu, Aditi Raghunathan, Percy Liang, and Chelsea Finn.
\newblock Decoupling exploration and exploitation for meta-reinforcement
  learning without sacrifices.
\newblock In \emph{International conference on machine learning}, pages
  6925--6935. PMLR, 2021{\natexlab{a}}.

\bibitem[Liu et~al.(2020)Liu, Keramati, Seshadri, Guu, Pasupat, Brunskill, and
  Liang]{liu2020learning}
Evan~Zheran Liu, Ramtin Keramati, Sudarshan Seshadri, Kelvin Guu, Panupong
  Pasupat, Emma Brunskill, and Percy Liang.
\newblock Learning abstract models for strategic exploration and fast reward
  transfer.
\newblock \emph{arXiv preprint arXiv:2007.05896}, 2020.

\bibitem[Liu et~al.(2021{\natexlab{b}})Liu, Lever, Wang, Merel, Eslami, Hennes,
  Czarnecki, Tassa, Omidshafiei, Abdolmaleki, et~al.]{liu2021motor}
Siqi Liu, Guy Lever, Zhe Wang, Josh Merel, SM~Eslami, Daniel Hennes, Wojciech~M
  Czarnecki, Yuval Tassa, Shayegan Omidshafiei, Abbas Abdolmaleki, et~al.
\newblock From motor control to team play in simulated humanoid football.
\newblock \emph{arXiv preprint arXiv:2105.12196}, 2021{\natexlab{b}}.

\bibitem[Lu et~al.(2021)Lu, Shen, Zhou, Courville, Tenenbaum, and
  Gan]{lu2021learning}
Yuchen Lu, Yikang Shen, Siyuan Zhou, Aaron Courville, Joshua~B Tenenbaum, and
  Chuang Gan.
\newblock Learning task decomposition with ordered memory policy network.
\newblock \emph{arXiv preprint arXiv:2103.10972}, 2021.

\bibitem[Maddison et~al.(2016)Maddison, Mnih, and Teh]{maddison2016concrete}
Chris~J Maddison, Andriy Mnih, and Yee~Whye Teh.
\newblock The concrete distribution: A continuous relaxation of discrete random
  variables.
\newblock \emph{arXiv preprint arXiv:1611.00712}, 2016.

\bibitem[McGovern and Sutton(1998)]{mcgovern1998macro}
Amy McGovern and Richard~S Sutton.
\newblock Macro-actions in reinforcement learning: An empirical analysis.
\newblock \emph{Computer Science Department Faculty Publication Series},
  page~15, 1998.

\bibitem[Mnih et~al.(2015)Mnih, Kavukcuoglu, Silver, Rusu, Veness, Bellemare,
  Graves, Riedmiller, Fidjeland, Ostrovski, et~al.]{mnih2015human}
Volodymyr Mnih, Koray Kavukcuoglu, David Silver, Andrei~A Rusu, Joel Veness,
  Marc~G Bellemare, Alex Graves, Martin Riedmiller, Andreas~K Fidjeland, Georg
  Ostrovski, et~al.
\newblock Human-level control through deep reinforcement learning.
\newblock \emph{nature}, 518\penalty0 (7540):\penalty0 529--533, 2015.

\bibitem[Murphy(2012)]{murphy2012machine}
Kevin~P Murphy.
\newblock \emph{Machine learning: a probabilistic perspective}.
\newblock MIT press, 2012.

\bibitem[Nachum et~al.(2018)Nachum, Gu, Lee, and Levine]{nachum2018data}
Ofir Nachum, Shixiang Gu, Honglak Lee, and Sergey Levine.
\newblock Data-efficient hierarchical reinforcement learning.
\newblock \emph{arXiv preprint arXiv:1805.08296}, 2018.

\bibitem[Nam et~al.(2021)Nam, Sun, Pertsch, Hwang, and Lim]{nam2021skillbased}
Taewook Nam, Shao-Hua Sun, Karl Pertsch, Sung~Ju Hwang, and Joseph~J Lim.
\newblock Skill-based meta-reinforcement learning.
\newblock In \emph{Fifth Workshop on Meta-Learning at the Conference on Neural
  Information Processing Systems}, 2021.
\newblock URL \url{https://openreview.net/forum?id=jsV1-AQVEFY}.

\bibitem[Neyshabur et~al.(2014)Neyshabur, Tomioka, and
  Srebro]{neyshabur2014search}
Behnam Neyshabur, Ryota Tomioka, and Nathan Srebro.
\newblock In search of the real inductive bias: On the role of implicit
  regularization in deep learning.
\newblock \emph{arXiv preprint arXiv:1412.6614}, 2014.

\bibitem[Neyshabur et~al.(2017)Neyshabur, Bhojanapalli, McAllester, and
  Srebro]{neyshabur2017exploring}
Behnam Neyshabur, Srinadh Bhojanapalli, David McAllester, and Nathan Srebro.
\newblock Exploring generalization in deep learning.
\newblock \emph{arXiv preprint arXiv:1706.08947}, 2017.

\bibitem[Niekum et~al.(2013)Niekum, Chitta, Barto, Marthi, and
  Osentoski]{niekum2013incremental}
Scott Niekum, Sachin Chitta, Andrew~G Barto, Bhaskara Marthi, and Sarah
  Osentoski.
\newblock Incremental semantically grounded learning from demonstration.
\newblock In \emph{Robotics: Science and Systems}, volume~9, pages 10--15607.
  Berlin, Germany, 2013.

\bibitem[Oh et~al.(2017)Oh, Singh, Lee, and Kohli]{oh2017zero}
Junhyuk Oh, Satinder Singh, Honglak Lee, and Pushmeet Kohli.
\newblock Zero-shot task generalization with multi-task deep reinforcement
  learning.
\newblock In \emph{International Conference on Machine Learning}, pages
  2661--2670. PMLR, 2017.

\bibitem[Oord et~al.(2016)Oord, Dieleman, Zen, Simonyan, Vinyals, Graves,
  Kalchbrenner, Senior, and Kavukcuoglu]{oord2016wavenet}
Aaron van~den Oord, Sander Dieleman, Heiga Zen, Karen Simonyan, Oriol Vinyals,
  Alex Graves, Nal Kalchbrenner, Andrew Senior, and Koray Kavukcuoglu.
\newblock Wavenet: A generative model for raw audio.
\newblock \emph{arXiv preprint arXiv:1609.03499}, 2016.

\bibitem[Oord et~al.(2017)Oord, Vinyals, and Kavukcuoglu]{oord2017neural}
Aaron van~den Oord, Oriol Vinyals, and Koray Kavukcuoglu.
\newblock Neural discrete representation learning.
\newblock \emph{arXiv preprint arXiv:1711.00937}, 2017.

\bibitem[Pertsch et~al.(2020)Pertsch, Lee, and Lim]{pertsch2020accelerating}
Karl Pertsch, Youngwoon Lee, and Joseph~J Lim.
\newblock Accelerating reinforcement learning with learned skill priors.
\newblock \emph{arXiv preprint arXiv:2010.11944}, 2020.

\bibitem[Rakelly et~al.(2019)Rakelly, Zhou, Finn, Levine, and
  Quillen]{rakelly2019efficient}
Kate Rakelly, Aurick Zhou, Chelsea Finn, Sergey Levine, and Deirdre Quillen.
\newblock Efficient off-policy meta-reinforcement learning via probabilistic
  context variables.
\newblock In \emph{International conference on machine learning}, pages
  5331--5340. PMLR, 2019.

\bibitem[Rao et~al.(2021)Rao, Sadeghi, Hasenclever, Wulfmeier, Zambelli,
  Vezzani, Tirumala, Aytar, Merel, Heess, et~al.]{rao2021learning}
Dushyant Rao, Fereshteh Sadeghi, Leonard Hasenclever, Markus Wulfmeier, Martina
  Zambelli, Giulia Vezzani, Dhruva Tirumala, Yusuf Aytar, Josh Merel, Nicolas
  Heess, et~al.
\newblock Learning transferable motor skills with hierarchical latent mixture
  policies.
\newblock \emph{arXiv preprint arXiv:2112.05062}, 2021.

\bibitem[Rissanen(1978)]{rissanen1978modeling}
Jorma Rissanen.
\newblock Modeling by shortest data description.
\newblock \emph{Automatica}, 14\penalty0 (5):\penalty0 465--471, 1978.

\bibitem[Roderick et~al.(2017)Roderick, Grimm, and Tellex]{roderick2017deep}
Melrose Roderick, Christopher Grimm, and Stefanie Tellex.
\newblock Deep abstract q-networks.
\newblock \emph{arXiv preprint arXiv:1710.00459}, 2017.

\bibitem[Shankar and Gupta(2020)]{shankar2020learning}
Tanmay Shankar and Abhinav Gupta.
\newblock Learning robot skills with temporal variational inference.
\newblock In \emph{International Conference on Machine Learning}, pages
  8624--8633. PMLR, 2020.

\bibitem[Shankar et~al.(2020)Shankar, Tulsiani, Pinto, and
  Gupta]{Shankar2020Discovering}
Tanmay Shankar, Shubham Tulsiani, Lerrel Pinto, and Abhinav Gupta.
\newblock Discovering motor programs by recomposing demonstrations.
\newblock In \emph{International Conference on Learning Representations}, 2020.
\newblock URL \url{https://openreview.net/forum?id=rkgHY0NYwr}.

\bibitem[Sharma et~al.(2020)Sharma, Gu, Levine, Kumar, and
  Hausman]{Sharma2020Dynamics}
Archit Sharma, Shixiang Gu, Sergey Levine, Vikash Kumar, and Karol Hausman.
\newblock Dynamics-aware unsupervised discovery of skills.
\newblock In \emph{International Conference on Learning Representations}, 2020.
\newblock URL \url{https://openreview.net/forum?id=HJgLZR4KvH}.

\bibitem[Sharma et~al.(2018)Sharma, Sharma, Rhinehart, and
  Kitani]{sharma2018directed}
Arjun Sharma, Mohit Sharma, Nicholas Rhinehart, and Kris~M Kitani.
\newblock Directed-info gail: Learning hierarchical policies from unsegmented
  demonstrations using directed information.
\newblock \emph{arXiv preprint arXiv:1810.01266}, 2018.

\bibitem[Shiarlis et~al.(2018)Shiarlis, Wulfmeier, Salter, Whiteson, and
  Posner]{shiarlis2018taco}
Kyriacos Shiarlis, Markus Wulfmeier, Sasha Salter, Shimon Whiteson, and Ingmar
  Posner.
\newblock Taco: Learning task decomposition via temporal alignment for control.
\newblock In \emph{International Conference on Machine Learning}, pages
  4654--4663. PMLR, 2018.

\bibitem[Shtarkov(1987)]{shtarkov1987univresal}
Yuri~M Shtarkov.
\newblock Univresal sequential coding of single messages.
\newblock \emph{Prob. Pered. Inf.}, 23:\penalty0 175--186, 1987.

\bibitem[Singh et~al.(2021)Singh, Liu, Zhou, Yu, Rhinehart, and
  Levine]{singh2021parrot}
Avi Singh, Huihan Liu, Gaoyue Zhou, Albert Yu, Nicholas Rhinehart, and Sergey
  Levine.
\newblock Parrot: Data-driven behavioral priors for reinforcement learning.
\newblock In \emph{International Conference on Learning Representations}, 2021.
\newblock URL \url{https://openreview.net/forum?id=Ysuv-WOFeKR}.

\bibitem[Sontakke et~al.(2021)Sontakke, Roychowdhury, Sarkar, Puri,
  Krishnamurthy, and Itti]{sontakke2021video2skill}
Sumedh~A Sontakke, Sumegh Roychowdhury, Mausoom Sarkar, Nikaash Puri, Balaji
  Krishnamurthy, and Laurent Itti.
\newblock Video2skill: Adapting events in demonstration videos to skills in an
  environment using cyclic mdp homomorphisms.
\newblock \emph{arXiv preprint arXiv:2109.03813}, 2021.

\bibitem[Sutton et~al.(1999)Sutton, Precup, and Singh]{sutton1999between}
Richard~S Sutton, Doina Precup, and Satinder Singh.
\newblock Between mdps and semi-mdps: A framework for temporal abstraction in
  reinforcement learning.
\newblock \emph{Artificial intelligence}, 112\penalty0 (1-2):\penalty0
  181--211, 1999.

\bibitem[Tanneberg et~al.(2021)Tanneberg, Ploeger, Rueckert, and
  Peters]{tanneberg2021skid}
Daniel Tanneberg, Kai Ploeger, Elmar Rueckert, and Jan Peters.
\newblock Skid raw: Skill discovery from raw trajectories.
\newblock \emph{IEEE Robotics and Automation Letters}, 6\penalty0 (3):\penalty0
  4696--4703, 2021.

\bibitem[Thrun and Schwartz(1994)]{thrun1994finding}
Sebastian Thrun and Anton Schwartz.
\newblock Finding structure in reinforcement learning.
\newblock \emph{Advances in neural information processing systems}, 7, 1994.

\bibitem[Valle-Perez et~al.(2018)Valle-Perez, Camargo, and
  Louis]{valle2018deep}
Guillermo Valle-Perez, Chico~Q Camargo, and Ard~A Louis.
\newblock Deep learning generalizes because the parameter-function map is
  biased towards simple functions.
\newblock \emph{arXiv preprint arXiv:1805.08522}, 2018.

\bibitem[Van~Hasselt et~al.(2016)Van~Hasselt, Guez, and Silver]{van2016deep}
Hado Van~Hasselt, Arthur Guez, and David Silver.
\newblock Deep reinforcement learning with double q-learning.
\newblock In \emph{Proceedings of the AAAI conference on artificial
  intelligence}, volume~30, 2016.

\bibitem[Vezhnevets et~al.(2017)Vezhnevets, Osindero, Schaul, Heess, Jaderberg,
  Silver, and Kavukcuoglu]{vezhnevets2017feudal}
Alexander~Sasha Vezhnevets, Simon Osindero, Tom Schaul, Nicolas Heess, Max
  Jaderberg, David Silver, and Koray Kavukcuoglu.
\newblock Feudal networks for hierarchical reinforcement learning.
\newblock In \emph{International Conference on Machine Learning}, pages
  3540--3549. PMLR, 2017.

\bibitem[Wang et~al.(2016{\natexlab{a}})Wang, Kurth-Nelson, Tirumala, Soyer,
  Leibo, Munos, Blundell, Kumaran, and Botvinick]{wang2016learning}
Jane~X Wang, Zeb Kurth-Nelson, Dhruva Tirumala, Hubert Soyer, Joel~Z Leibo,
  Remi Munos, Charles Blundell, Dharshan Kumaran, and Matt Botvinick.
\newblock Learning to reinforcement learn.
\newblock \emph{arXiv preprint arXiv:1611.05763}, 2016{\natexlab{a}}.

\bibitem[Wang et~al.(2016{\natexlab{b}})Wang, Schaul, Hessel, Hasselt, Lanctot,
  and Freitas]{wang2016dueling}
Ziyu Wang, Tom Schaul, Matteo Hessel, Hado Hasselt, Marc Lanctot, and Nando
  Freitas.
\newblock Dueling network architectures for deep reinforcement learning.
\newblock In \emph{International conference on machine learning}, pages
  1995--2003. PMLR, 2016{\natexlab{b}}.

\bibitem[Yang and Nachum(2021)]{yang2021representation}
Mengjiao Yang and Ofir Nachum.
\newblock Representation matters: Offline pretraining for sequential decision
  making.
\newblock \emph{arXiv preprint arXiv:2102.05815}, 2021.

\bibitem[Yang et~al.(2021)Yang, Levine, and Nachum]{yang2021trail}
Mengjiao Yang, Sergey Levine, and Ofir Nachum.
\newblock Trail: Near-optimal imitation learning with suboptimal data.
\newblock \emph{arXiv preprint arXiv:2110.14770}, 2021.

\bibitem[Zhang et~al.(2021{\natexlab{a}})Zhang, Bengio, Hardt, Recht, and
  Vinyals]{zhang2021understanding}
Chiyuan Zhang, Samy Bengio, Moritz Hardt, Benjamin Recht, and Oriol Vinyals.
\newblock Understanding deep learning (still) requires rethinking
  generalization.
\newblock \emph{Communications of the ACM}, 64\penalty0 (3):\penalty0 107--115,
  2021{\natexlab{a}}.

\bibitem[Zhang et~al.(2021{\natexlab{b}})Zhang, Pertsch, Yang, and
  Lim]{zhang2021minimum}
Jesse Zhang, Karl Pertsch, Jiefan Yang, and Joseph~J Lim.
\newblock Minimum description length skills for accelerated reinforcement
  learning.
\newblock In \emph{Self-Supervision for Reinforcement Learning Workshop - ICLR
  2021}, 2021{\natexlab{b}}.
\newblock URL \url{https://openreview.net/forum?id=r4XxtrIo1m9}.

\bibitem[Zhao et~al.(2020)Zhao, Gan, Wu, Guo, and
  Tenenbaum]{zhao2020augmenting}
Zelin Zhao, Chuang Gan, Jiajun Wu, Xiaoxiao Guo, and Joshua~B Tenenbaum.
\newblock Augmenting policy learning with routines discovered from a
  demonstration.
\newblock \emph{arXiv preprint arXiv:2012.12469}, 2020.

\bibitem[Zhou et~al.(2020)Zhou, Bajracharya, and Held]{zhou2020plas}
Wenxuan Zhou, Sujay Bajracharya, and David Held.
\newblock Plas: Latent action space for offline reinforcement learning.
\newblock \emph{arXiv preprint arXiv:2011.07213}, 2020.

\bibitem[Zhu et~al.(2021)Zhu, Stone, and Zhu]{zhu2021bottom}
Yifeng Zhu, Peter Stone, and Yuke Zhu.
\newblock Bottom-up skill discovery from unsegmented demonstrations for
  long-horizon robot manipulation.
\newblock \emph{arXiv preprint arXiv:2109.13841}, 2021.

\bibitem[Zintgraf et~al.(2019)Zintgraf, Shiarlis, Igl, Schulze, Gal, Hofmann,
  and Whiteson]{zintgraf2019varibad}
Luisa Zintgraf, Kyriacos Shiarlis, Maximilian Igl, Sebastian Schulze, Yarin
  Gal, Katja Hofmann, and Shimon Whiteson.
\newblock Varibad: A very good method for bayes-adaptive deep rl via
  meta-learning.
\newblock \emph{arXiv preprint arXiv:1910.08348}, 2019.

\end{thebibliography}

\newpage

\newpage
\appendix
\onecolumn

\section{Derivations}

\subsection{$p^{\star}_\rvz$ is a valid density}
\label{app:valid_density}
\begin{proposition}
$p^{\star}_\rvz(\xi)$ is a proper probability density function.
\end{proposition}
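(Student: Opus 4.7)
The plan is to verify the two defining properties of a probability density: non-negativity and normalization to one. Non-negativity is immediate since $p_\rvz^{\star}(\xi)$ is defined as $n_\text{s}^{-1}$ times an expectation of a sum of non-negative terms (the Dirac/Kronecker delta is non-negative and $\rvm_t \in \{0,1\}$), and $n_\text{s} > 0$ as long as at least one boundary variable fires with positive probability, which is guaranteed since $\rvm_1 = 1$ always.

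For normalization, I would integrate (or sum, in the discrete case) over $\xi$:
\begin{align*}
\int p_\rvz^{\star}(\xi)\, d\xi = \frac{1}{n_\text{s}} \int \E_{\substack{\mathbf{\tau}_{1:T},\\\rvm_{1:T},\\\rvz_{1:T}}}\left[\sum_{t=1}^T \delta(\rvz_t = \xi)\rvm_t\right] d\xi.
\end{align*}
The key step is to exchange the integral over $\xi$ with the expectation and the finite sum over $t$. Since the integrand is non-negative, Tonelli's theorem justifies this swap, giving
\begin{align*}
\int p_\rvz^{\star}(\xi)\, d\xi = \frac{1}{n_\text{s}} \E_{\substack{\mathbf{\tau}_{1:T},\\\rvm_{1:T},\\\rvz_{1:T}}}\left[\sum_{t=1}^T \rvm_t \int \delta(\rvz_t = \xi)\, d\xi\right] = \frac{1}{n_\text{s}} \E_{\substack{\mathbf{\tau}_{1:T},\rvm_{1:T}}}\left[\sum_{t=1}^T \rvm_t\right] = \frac{n_\text{s}}{n_\text{s}} = 1,
\end{align*}
using that $\int \delta(\rvz_t = \xi)\, d\xi = 1$ (and that the inner expression no longer depends on $\rvz_{1:T}$ after integration, so the $\rvz_{1:T}$ expectation drops out).

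For the discrete-$\rvz$ case, the argument is identical after replacing the integral by a sum over $\rvz \in [K]$ and interpreting $\delta$ as the Kronecker delta; Fubini for sums/expectations of non-negative terms again justifies the exchange. The only thing worth being careful about is the pathological edge case $n_\text{s} = 0$, but this cannot occur because $\rvm_1 = 1$ almost surely by construction of the generative model, so $n_\text{s} \geq 1$. I expect no real obstacle here — the statement is essentially a bookkeeping check that an empirical-marginal-style mixture is well-defined, and the only subtlety is the routine appeal to Tonelli to swap integration and expectation.
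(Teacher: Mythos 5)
Your proof is correct and follows essentially the same route as the paper's: exchange the integral over $\xi$ with the expectation and the finite sum (the paper cites Fubini where you invoke Tonelli, justified by non-negativity), then use $\int \delta(\rvz_t = \xi)\,d\xi = 1$ to reduce to $n_\text{s}/n_\text{s} = 1$. Your explicit handling of the $n_\text{s} = 0$ edge case via $\rvm_1 = 1$ is a small refinement the paper omits, but it does not change the argument.
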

\begin{proof}
Recall that
$$p^{\star}_\rvz(\xi) = \frac{\E_{\substack{\mathbf{\tau}_{1:T},\rvm_{1:T},\\\rvz_{1:T}}}\left[\sum_{t=1}^T \delta(\rvz_t = \xi)\rvm_t\right]}{\E_{\substack{\mathbf{\tau}_{1:T},\rvm_{1:T}}}\left[\sum_{t=1}^T \rvm_t\right]}.$$  
A function is a probability density function if it is non-negative, continuous, and integrates to 1. By construction, $p^{\star}_\rvz(\xi)$ is non-negative since $\rvm_t \geq 0$ and $\delta(\rvz_t = \xi) \geq 0$. Further, $\delta(\rvz_t = \xi)$ is continuous in $\xi$ so the numerator $\E_{\substack{\mathbf{\tau}_{1:T},\rvm_{1:T},\\\rvz_{1:T}}}\left[\sum_{t=1}^T \delta(\rvz_t = \xi)\rvm_t\right]$, an expectation containing $\delta(\rvz_t = \xi)$, is also continuous in $\xi$. Finally,
\begin{align*}
    \int_\xi p^{\star}_\rvz(\xi) \, d\xi  &= \frac{1}{\E_{\substack{\mathbf{\tau}_{1:T},\rvm_{1:T}}}\left[\sum_{t=1}^T \rvm_t\right]} \int_\xi \E_{\substack{\mathbf{\tau}_{1:T},\rvm_{1:T}, \rvz_{1:T}}}\left[\sum_{t=1}^T \delta(\rvz_t = \xi)\rvm_t\right] d\xi     \\
    &=\frac{1}{\E_{\substack{\mathbf{\tau}_{1:T},\rvm_{1:T}}}\left[\sum_{t=1}^T \rvm_t\right]}  \E_{\substack{\mathbf{\tau}_{1:T},\rvm_{1:T},\rvz_{1:T}}}\left[\sum_{t=1}^T \left(\int_\xi\delta(\rvz_t = \xi)d\xi\right)  \rvm_t\right]  &&\text{(Fubini's theorem)} \\
    &=\frac{1}{\E_{\substack{\mathbf{\tau}_{1:T},\rvm_{1:T}}}\left[\sum_{t=1}^T \rvm_t\right]}  \E_{\substack{\mathbf{\tau}_{1:T},\rvm_{1:T},\rvz_{1:T}}}\left[\sum_{t=1}^T 1 \cdot \rvm_t\right] \\
    &=1.
\end{align*}
Therefore, $p^{\star}_\rvz(\xi)$ satisfies all 3 properties of a probability density function. 
\end{proof}
\paragraph{Remarks.} This result generalizes to the case for discrete $\rvz$ by replacing the delta with an indicator function and integration with summation.

\subsection{Two expressions of $\gL_{\text{CL}}$}
\label{app:rewrite}
\begin{proposition}
The expected coding length of a trajectory is equal to the expected number skills multiplied by the marginal entropy:
\begin{align*}
\gL_{\text{CL}}(\mathbf{\theta}) = -\E_{\substack{\mathbf{\tau}_{1:T}, \rvm_{1:T},\\\rvz_{1:T}}}\left[\sum_{t=1}^T \log p_{\rvz}^{\star}(\rvz_t) \rvm_t\right] = n_\text{s}\gH_{p_{\rvz}^{\star}}[\rvz].
\end{align*}
\end{proposition}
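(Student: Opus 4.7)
The plan is to reduce the weighted-log-probability expectation to a classical entropy integral by unfolding $\log p^\star_\rvz(\rvz_t)$ using the defining delta of $p^\star_\rvz$ and then swapping the order of expectation and integration.

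Concretely, I would first use the sifting identity
\[
\log p^\star_\rvz(\rvz_t) \;=\; \int \delta(\rvz_t = \xi)\,\log p^\star_\rvz(\xi)\, d\xi,
\]
(with the integral replaced by a sum in the discrete case, using an indicator), to rewrite the summand inside the expectation. This expresses $-\log p^\star_\rvz(\rvz_t)\rvm_t$ as an integral over a ``dummy'' skill value $\xi$, whose coefficient involves only the delta and the boundary variable.

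Next, I would move the integral outside the expectation. The Fubini/Tonelli exchange is legitimate because the integrand is non-negative after absorbing the sign, or because the inner expectation is absolutely integrable since $p^\star_\rvz$ is a density (shown in Appendix~\ref{app:valid_density}) and $\rvm_t\in\{0,1\}$. After swapping, the inner factor is exactly the expression defining $p^\star_\rvz$ up to the normalizing constant: by the definition given in Section~\ref{sec:compression_objective},
\[
\E_{\mathbf{\tau}_{1:T},\rvm_{1:T},\rvz_{1:T}}\!\left[\sum_{t=1}^T \delta(\rvz_t = \xi)\,\rvm_t\right] \;=\; n_\text{s}\, p^\star_\rvz(\xi).
\]
Substituting this in leaves $-n_\text{s}\int p^\star_\rvz(\xi)\log p^\star_\rvz(\xi)\,d\xi$, which is precisely $n_\text{s}\,\gH_{p^\star_\rvz}[\rvz]$.

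The argument is essentially a bookkeeping manipulation, so there is no serious obstacle; the only subtlety is justifying the interchange of integration and expectation, and being careful that the same derivation covers both continuous and discrete $\rvz$ (the continuous case uses the delta, the discrete case uses an indicator and a sum). Both follow from Tonelli since all quantities are non-negative once the sign is pulled out, and from the fact that $p^\star_\rvz$ is a proper density/mass function as established in Appendix~\ref{app:valid_density}.
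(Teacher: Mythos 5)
Your proposal is correct and follows essentially the same route as the paper's proof in Appendix B.2: unfold $\log p^{\star}_{\rvz}(\rvz_t)$ via the sifting identity, exchange expectation and integration by Fubini, identify the inner expectation as $n_\text{s}\,p^{\star}_{\rvz}(\xi)$ by definition of the optimal prior, and read off the entropy. The only cosmetic difference is that the paper multiplies and divides by $n_\text{s}$ explicitly rather than citing the definition directly, and it likewise defers the discrete case to a remark exactly as you do.
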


\begin{proof}
\begingroup
\allowdisplaybreaks
\begin{align*}
    &\quad -\E_{\substack{\mathbf{\tau}_{1:T}, \rvm_{1:T}, \rvz_{1:T}}}\left[\sum_{t=1}^T \log p_{\rvz}^{\star}(\rvz_t) \rvm_t\right] \\
    &= -\E_{\substack{\mathbf{\tau}_{1:T}, \rvm_{1:T},  \rvz_{1:T}}}\left[\sum_{t=1}^T \left(\int_{\xi} \delta(\rvz_t = \xi)  \log p_{\rvz}^{\star}(\xi) d\xi \right)  \rvm_t \right] \\
    &=-\E_{\substack{\mathbf{\tau}_{1:T}, \rvm_{1:T},\rvz_{1:T}}}\left[ \int_{\xi} \sum_{t=1}^T \rvm_t \delta(\rvz_t = \xi)  \log p_{\rvz}^{\star}(\xi) d\xi   \right] &&\text{(Fubini's theorem)} \\
    &=- \int_{\xi} \E_{\substack{\mathbf{\tau}_{1:T},\rvm_{1:T},\rvz_{1:T}}}\left[\sum_{t=1}^T \rvm_t \delta(\rvz_t = \xi)\right]  \log p_{\rvz}^{\star}(\xi) d\xi&& \text{(Linearity of expectation)}   \\
    &=- \E_{\substack{\mathbf{\tau}_{1:T},\rvm_{1:T}}}\left[\sum_{t=1}^T \rvm_t\right]\int_{\xi} \frac{\E_{\substack{\mathbf{\tau}_{1:T},\rvm_{1:T},\rvz_{1:T}}}\left[\sum_{t=1}^T \rvm_t \delta(\rvz_t = \xi)\right]}{\E_{\substack{\mathbf{\tau}_{1:T},\rvm_{1:T}}}\left[\sum_{t=1}^T  \rvm_t\right]} \log p_{\rvz}^{\star}(\xi) d\xi  \\
    &= \underbrace{\E_{\substack{\mathbf{\tau}_{1:T},\rvm_{1:T},\rvz_{1:T}}}\left[\sum_{t=1}^T \rvm_t\right]}_{n_\text{s}} \,\, \underbrace{\int_{\xi} -p_{\rvz}^{\star}(\xi) \log p_{\rvz}^{\star}(\xi) d\xi}_{\gH_{p_{\rvz}^{\star}}[\rvz]} \\
    &= n_\text{s}\gH_{p_{\rvz}^{\star}}[\rvz]
\end{align*}
\endgroup
\end{proof}
\paragraph{Remarks.} This result generalizes to the case for discrete $\rvz$ by replacing the delta with an indicator function and integration with summation. Further, this rewriting not only offers an insightful interpretation of the objective but also has \textit{different} finite-sample gradient compared to the original form.
We discuss its implication further in Section \ref{sec:practical}.

\subsection{Connection between \ours and Variational Inference}
\label{app:prior}
Is the compression objective a prior in the framework of variational inference (VI)? The answer to this question is surprisingly nuanced. A bridge that connects VI and source coding is the bits-back coding argument~\citep{hinton1994autoencoders, honkela2004variational}. Under this scheme, the additional cost of communicating a message is equal to the KL divergence between the approximate posterior and the chosen prior distribution. This is because the posterior often does not match the prior and the sender must use more bits to send the message compared to using the prior exactly. In modern deep latent variable models such as VAE~\citep{kingma2013auto}, the prior is in general chosen to be an isotropic Gaussian distribution or a sequence of isotropic Gaussian distributions that decomposes over the time steps. The bit-back coding argument shows that VI as a coding scheme is asymptotically optimal (in the limit of sending large numbers of messages), but it does not immediately guarantee good \textit{representation learning}. As a thought experiment, imagine we choose the dimension of the latent variable in a VAE to be equal to the dimension of the data itself. It is clear that the model is not incentivized to perform good representation learning, but the bit-backing coding scheme still guarantees asymptotic optimality. Instead, good representation can only emerge \textit{if} the prior is chosen properly.

Contrary to this prevailing design choice, our latent variables $\rvz_{1:T}$ and $\rvm_{1:T}$ are not independent from each other and the timesteps are not independent from each other. In fact, the interaction between the latent variables is crucial for compressing the trajectory optimally in our framework. Therefore, our objective takes into account the \textit{global} structure of the latent variables. Note that $\gL_\text{CL}$ also has the natural interpretation of measuring the average number of bits that is required to send a trajectory under the coding scheme of this work. As both quantities measure the cost of the communication, the philosophical parallel between KL divergence and $\gL_\text{CL}$ is likely not a coincidence. In practice, we observe that $\gL_\text{CL}$ also has comparable regularization effect on the latent code, and in some cases it is possible to learn a good model \textit{without} any KL divergence if $\gL_\text{CL}$ is present. From this perspective, our compression objective is indeed a ``prior'' insofar as it specifies the desired properties of the posterior. Another interpretation of \ours is that it is optimizing the prior \textit{directly}.

Nonetheless, it is an open question whether $\gL_\text{CL}$ has a precise counterpart in Bayesian inference, that is, it is not immediately obvious if it can be written as the KL divergence between the posterior $q(\rvz_{1:T}, \rvm_{1:T} \mid \rvx_{1:T})$ and some prior distribution $p(\rvz_{1:T}, \rvm_{1:T})$. One complication, for example, is the fact that the log density of $\rvm_{1:T}$ does not participate in the computation of $\gL_\text{CL}$. It may be possible to construct some form of hierarchical priors that enables a fully Bayesian interpretation of $\gL_\text{CL}$, but it is also well-known that the MDL framework can accommodate codes that are not Bayesian (e.g., the Shtarkov normalized maximum likelihood code~\citep{shtarkov1987univresal}). We do not offer a definitive answer to this question in this work, but the connection between Bayesian inference and $\gL_\text{CL}$ is certainly an interesting direction for future works.

\section{Evidence Lower Bound}
\label{app:elbo}
\subsection{VTA}
We reproduce the data generating process of \citet{kim2019variational} here:
\begin{align*}
    p(\rvx_{1:T}, \rvz_{1:T}, \rvs_{1:T}, \rvm_{1:T}) = \prod_{t=1}^T \,\, p(\rvx_t \mid \rvs_t) p(m_t \mid \rvs_t) p(\rvs_t \mid \rvs_{1:t}, \rvz_t, m_{t-1}) p(\rvz_t \mid \rvz_{1:t}, m_{t-1}).
\end{align*}
VTA~\citep{kim2019variational} shows that the ELBO can be written as:
\begin{align*}
\gL_{\text{ELBO}}(\boldsymbol{\theta}, \boldsymbol{\phi})  \approx \sum_{t=1}^T \quad &\log p_{\boldsymbol{\phi}}(\rvx_t \mid \rvs_t) + D_{\text{KL}}\left(q_{\boldsymbol{\theta}}(m_t \mid \rvx_{1:T}) \, \| \, p_{\boldsymbol{\phi}}(m_t \mid \rvs_t) \right)\\
 +\,\,& D_{\text{KL}}\left(q_{\boldsymbol{\theta}}(\rvz_t \mid \rvs_t, \rvm_{1:T}, \rvx_{1:T}) \, \| \, p_{\boldsymbol{\phi}}(\rvz_t \mid \rvs_{t-1}, m_{t-1}) \right) \\
+\,\,& D_{\text{KL}}\left(q_{\boldsymbol{\theta}}(\rvs_t \mid \rvs_{t-1}, \rvm_{1:T}, \rvx_{1:T}) \, \| \, p_{\boldsymbol{\phi}}(\rvs_t \mid \rvs_{t-1}, \rvz_t,  m_{t-1}) \right).
\end{align*}
Like most ELBO, this ELBO can be deconstructed into a reconstruction term:
\begin{align*}
    \gL_{\text{rec}}(\boldsymbol{\theta}, \boldsymbol{\phi}) = \sum_{t=1}^T &\log p_{\boldsymbol{\phi}}(\rvx_t \mid \rvs_t),
\end{align*}
and the remaining KL term $\gL_{\text{KL}}(\boldsymbol{\theta}, \boldsymbol{\phi})$.
\subsection{\ours}
We reproduce the data generating process of \ours here. Instead of using $\boldsymbol{\phi}$ for the parameters of priors and decoder, we will merge all parameters into $\boldsymbol{\theta}$:
\begin{align*}
    p(\rva_{1:T}, \rvz_{1:T}, \rvs_{1:T}, \rvm_{1:T}\mid \rvx_{1:T}) = \prod_{t=1}^T \,\, p(\rva_t \mid \rvs_t) p(m_t \mid \rvs_t) p(\rvs_t \mid \rvx_{t}, \rvz_t, m_{t-1}) p(\rvz_t \mid \rvx_{t}, \rvz_{1:t}, m_{t-1}).
\end{align*}

For \ours, the ELBO is re-written as:
\begin{align}\label{eqn:rl_elbo}
\gL_{\text{ELBO}}(\boldsymbol{\theta})  \approx \sum_{t=1}^T \quad &\log p_{\boldsymbol{\theta}}(\rva_t \mid \rvs_t) \\+\,\,& D_{\text{KL}}\left(q_{\boldsymbol{\theta}}(m_t \mid \rvx_{1:t}) \, \| \, p_{\boldsymbol{\theta}}(m_t \mid \rvs_t) \right)
 + D_{\text{KL}}\left(q_{\boldsymbol{\theta}}(z_t \mid m_t, \rvx_{1:T}, \rva_{1:T}) \, \| \, p(z_t) \right)\nonumber \\
+\,\,& D_{\text{KL}}\left(q_{\mathbf{\theta}}(\rvs_t \mid z_t, \rvx_t) \, \| \, p_{\boldsymbol{\theta}}(\rvs_t \mid \rvs_{t-1}, m_{t-1}) \right). \nonumber
\end{align}
Since $z_t$ is a categorical distribution, we chose the uniform prior $p(z_t)$ similar to \citet{oord2017neural}.
Similar to VTA, the loss is also decomposed into a reconstruction term and a KL term.

\section{Sensitivity to Compression Objective Weighting}
\label{app:sensitivity}
Here we study how sensitive \ours is to the choice of $\lambda$, the coefficient of $\gL_{\text{CL}}$. We tested 5 values of $\lambda \in [0.01, 1]$ on \emph{Simple} and \emph{Cond. Colors} and applied dual gradient descent to automatically tune $\lambda$. Stopping point is selected based on the shortest code length achieved.
\begin{table}[H]
\centering
\caption{The effect of different values of $\lambda$ and dual GD on F1 scores of \textit{Simple Colors} and \textit{Conditional Colors} over 5 seeds.}
\small
\begin{tabular}{cccccccc}
\toprule
& \!$\lambda \! =\! 0.01$\! & \!\!$\lambda \!=\! 0.03$\! & \!\!\!$\lambda \!=\! 0.1$\!\!\! & \!\!\!\!\!$\lambda \!=\! 0.3$\!\!\!\! & \!\!\!\!$\lambda \!=\! 1.0$\!\!\! & $\!\!\lambda$ (dual GD)\!\! & \!\!VTA\!\! \\
\cmidrule(lr){2-2} \cmidrule(lr){3-3} \cmidrule(lr){4-4} \cmidrule(lr){5-5} \cmidrule(lr){6-6} \cmidrule(lr){7-7} \cmidrule(lr){8-8}
\!\!Simple (F1)\!\!\! & $.67\pm.27$ & \!\!$.80 \pm .21$ \! & $.91 \pm .02$ & $.95\pm.06$ & $.74\pm .08$ & $.99\pm.00$ & $.82\pm.13$ \\
\!\!Cond. (F1)\!\!\! & $.84\pm .06$ & \!\!$.92 \pm .02$\! & $.90 \pm .03$ & $.82 \pm .08$ & $.87 \pm .08$ & $.99 \pm .00$& $.83 \pm .19$ \\
\bottomrule
\end{tabular}
\vspace{1mm}
\label{tab:lambda_ablation}
\vspace{-6mm}
\end{table}

We see that while $\lambda$ can change the quality of solutions found, the performance is stable within a $\pm \, 3 \times$ regions around $\lambda=0.1$. Further, using dual GD \textit{consistently} outperforms handpicked $\lambda$ with an ELBO threshold $C$ slightly (e.g., $5\%$) higher than the $\mathcal{L}_\text{ELBO}$ achieved without compression, which can be obtained with minimal prior knowledge about the task.
Hence $\lambda$ likely can be tuned without strong prior knowledge. This contrasts with the other methods such as \citet{kipf2019compile} which require knowing the number of skills in each demonstration to perform well while achieving the similar or better performance.

\section{Architecture}
\label{app:arch}
\subsection{Frame Prediction}
Our base architecture is similar to \citet{kim2019variational}\footnote{\url{https://github.com/taesupkim/vta}} and we highlight the difference {\color{blue} blue}. The parameters of the posteriors are implemented with neural networks.
First, the observation $\rvx_{1:T}$ are encoded into a lower dimension embedding $\hat{\rvx}_{1:T}$ with a \texttt{observation encoder}. The embedding is then passed through a \texttt{boundary posterior decoder} that outputs the parameters of $\rvm_{1:T}$. The embedding is further passed through a GRU~\citep{chung2014empirical}, $f_{\text{s-rnn-fwd}}$ which runs on $\hat{\rvx}_{1:T}$ and give the cell state $\texttt{h}_{1:T}$. For $\rvz_{1:T}$, we have two GRU's with cell, $f_{\text{z-rnn-bwd}}$ and $f_{\text{z-rnn-bwd}}$, which run on $\hat{\rvx}_{1:T}$ in different temporal direction (i.e., $f_{\text{z-rnn-bwd}}$ sees the future) to generate cell states $\texttt{c}^{\text{fwd}}_{1:T}$ and $\texttt{c}^{\text{bwd}}_{1:T}$. 
For each time step $t$, a candidate $\rvz_t'$ is sampled from a {\color{blue} $n_z$-dimensional categorical distribution}\footnote{ \citet{kim2019variational} uses a isotropic Gaussian.} whose log probability is a linear projection of $\texttt{c}^{\text{fwd}}_{t} \| \texttt{c}^{\text{bwd}}_{t} $. The sampled $\rvz_t'$ are embedded linearly into a vector of size 128. If $\rvm_1 = 1$, the $\rvz_t = \rvz'_t$; otherwise, $\rvz_t = \rvz_{t-1}$ \footnote{This is the COPY action in \citet{kim2019variational}.}. Then we concatenate $\texttt{h}_{\textcolor{blue}{t-1}} \| \rvz_t$ and linearly project the vector into the mean and standard deviation of a isotropic Gaussian in $\sR^8$. A $\hat{\rvs}_t$ is sampled from this Gaussian.

Th model also keeps two ``belief'' vectors that keep track of the history of the computation. These vectors are modulated by two RNN, $f_{\text{h-rnn}}$ and $f_{\text{c-rnn}}$. The abstraction belief $\rvc_t = \rvm_{t} \cdot f_{\text{c-rnn}}(\rvz_{t}, \rvc_{t-1}) + (1-\rvm_{t}) \cdot \rvc_{t-1}$. The observation belief $\rvh_{t} = \rvm_{t}  \rvc_{t-1} + (1-\rvm_{t}) \cdot f_{\text{h-rnn}}(\hat{\rvs}_t \| \rvc_{t} \| \rvz_t, \rvh_{t-1})$. Finally the state abstraction is computed as the linear projection of $\rvs_t = \text{proj}(\rvh_t \| \hat{\rvs}_t)$. The projection layer is $256 \times 128$. Finally, $\rvs_t$ is passed through a \texttt{observation decoder} and decode into the reconstruction of $\rvx_t$. Refer to \citet{kim2019variational} for more details and an illustration of the computation graph.

For each convolutional layer or transposed convolution layer, the tuple's values correspond to input channel, output channel, kernel size, stride, padding. With that notation in mind, the specific hyperparameters for each components are:
\begin{itemize}
    \item \texttt{observation encoder}: 4-layer convolutional neural network with \{(3, 128, 4, 2, 1), (128, 128, 4, 2, 1), (128, 128, 4, 2, 1), (128, 128, 4, 1, 0)\} with ELU activation~\citep{clevert2015fast} and batch normalization~\citep{ioffe2015batch}. The output is flattened linearly projected to vector of size 128.
    \item \texttt{boundary posterior decoder} is a {\color{blue} 5-layer 1-D causal temporal convolution~\citep{oord2016wavenet} with \{(128, 128, 5, 1, 2) $\times$ 5, (128, 2, 5, 1, 2)\} with ELU activation and batch normalization}. The output is the logits for the $\rvm_{1:T}$.
    \item $f_{\text{z-rnn-bwd}}, f_{\text{z-rnn-fwd}}, f_{\text{s-rnn-fwd}}, f_{\text{c-rnn}}, f_{\text{h-rnn}}$: GRU with cell size 128
    \item \texttt{observation decoder}: 4-layer transposed convolutional layers \{(128, 128, 4, 1, 0), (128, 128, 4, 2, 1), (128, 128, 4, 2, 1), (128, 3, 4, 2, 1)\} with ELU activation and batch normalization on all but the last layer. The last layer has no normalization and has \texttt{tanh} activation.
\end{itemize}

\paragraph{{\color{blue}Removing Boundary Regularization.}}
An important difference from \citet{kim2019variational} is that we do not enforce maximum number of subsequence $N_{\text{max}}$ and the maximum length of subsequence $l_{\text{max}}$ through the boundary prior (This effectively amounts to setting both to a value larger than the sequence length). These two hyperparameters' effect are similar to that of picking the number of segments in CompILE~\citep{kipf2019compile} and provide strong training signal. This assumption is in general problematic since we cannot expect to know a priori what the good values are for these hyperparameters. \citet{kipf2019compile} demonstrates that the performance can suffer if this kind of supervision is wrong. On the other hand, we do not assume anything about the the duration of the subsequence or how many subsequences there are in each demonstration.

We do, however, enforce a minimum length on the skill. While it is largely an optimization choice, we believe this prior is significantly weaker since useful options are almost always temporally extended. Indeed, with only the minimum length constraint, VTA is unable to learn good skills conducive for downstream tasks.

\subsection{Multi-task Grid World Environment}
\label{app:interaction}
For trajectories, we make some larger changes to the architecture to encode properties we want in a model for learning options, e.g., Markov property. Though similar in spirit, a large portion of the architecture is different from the base VTA model, so the difference will not be highlighted.

For the posterior, we have an \texttt{observation encoder} and an \texttt{action encoder} that embeds both the state observation $\rvx_{1:T}$ and actions $\rva_{1:T}$ to get embedding $\texttt{a}_{1:T}$ and $\texttt{x}_{1:T}$ each of size 128. The concatenation $\texttt{a}_{t} \| \texttt{x}_{t}$ linear projected down to a vector of size 128.
The embedding is then passed through a \texttt{boundary posterior decoder} that outputs the parameters of $\rvm_{1:T}$. 
The embedding is further passed through a GRU~\citep{chung2014empirical}, $f_{\text{s-rnn-fwd}}$ which runs on $\hat{\rvx}_{1:T}$ and give the cell state $\texttt{h}_{1:T}$. 
For $\rvz_{1:T}$, we have two GRU's with cell, $f_{\text{z-rnn-bwd}}$ and $f_{\text{z-rnn-bwd}}$, which run on $\hat{\rvx}_{1:T}$ in different temporal direction (i.e., $f_{\text{z-rnn-bwd}}$ sees the future) to generate cell states $\texttt{c}^{\text{fwd}}_{1:T}$ and $\texttt{c}^{\text{bwd}}_{1:T}$.

For sampling $\rvz$ is done differently for interaction data for better gradient and representation learning. Instead of doing straight-through estimator for the categorical random variable, we opt to use vector-quantization~\citep{oord2017neural} with straight-through estimators. First, $\texttt{c}^{\text{fwd}}_{t} \| \texttt{c}^{\text{bwd}}_{t}$ is linearly projected into dimension 128. Then a ReLU is applied and another 128 by 128 linear layer is applied. The VQ codebook is of size $n_{\rvz} \times 128$. Instead of taking argmax of the codebook, we approximate the distribution to be proportional to the softmax over the distance over a temperature $t_{VQ}$. We can sample from this distribution and apply the straight-through gradient on the sampled code candidate $\rvz'_{1:T}$.  If $\rvm_1 = 1$, the $\rvz_t = \rvz'_t$; otherwise, $\rvz_t = \rvz_{t-1}$.

We construct directly $\rvs_t = \rvz_t \| \texttt{x}_t$. and linearly project the vector into the mean and standard deviation of a isotropic Gaussian $\sR^8$. $\rvs_t$ is sampled from this Gaussian and then decoded with the \texttt{action decoder} into the reconstructed action in $\gA$.

For each convolutional layer or transposed convolution layer, the tuple's values correspond to input channel, output channel, kernel size, stride, padding. With that notation in mind, the specific hyperparameters for each components are:
\begin{itemize}
    \item \texttt{observation encoder}: This takes as input the $10 \times 10 \times N_\text{obj} + 2$ and is implemented as a two 2D convolutional layers with ReLU activations, followed by a linear layer with a ReLU activation and a linear layer with no activation.
    The convolutional layers have hyperparameters $(12, 64, 3, 1, 0)$ and $(64, 64, 3, 1, 0)$ respectively.
    The first linear has input dimension $6 \times 6 \times 64$ and output dimension $128$.
    The second linear layer has input and output dimensions $128$.
    \item \texttt{action encoder}: The actions are embedded with an embedding matrix with embedding dimension 128.
    \item \texttt{boundary posterior decoder} is a 5-layer 1-D causal temporal convolution~\citep{oord2016wavenet} with \{(128, 128, 5, 1, 2) $\times$ 5, (128, 2, 5, 1, 2)\} with ELU activation and batch normalization. The output is the logits for the $\rvm_{1:T}$.
    \item $f_{\text{z-rnn-bwd}}, f_{\text{z-rnn-fwd}}, f_{\text{s-rnn-fwd}}$: GRU with cell size 128
    \item \texttt{action decoder}: The decoder is implemented as three linear layers with ReLU activations, followed by a linear layer with no activation.
    The (input, output) dimensions of these layers are as follows: $(128, 128), (128, 128), (128, 128), (128, 5)$, where the last layer outputs logits over the actions.
\end{itemize}

\subsection{3D Navigation Environment}
\label{app:3d_arch}
The architecture used in the 3D navigation environment is identical to that of the multi-task environment detailed above, with the exception that the \texttt{observation encoder} is changed to handle visual inputs.
Specifically, the \texttt{observation encoder} is a a 3-layer convolutional neural network parameters $(3, 32, 5, 2, 0), (32, 32, 5, 2, 0), (32, 32, 4, 2, 0)$, followed two linear layers $(7520, 128), (128, 128)$ with a ReLU activation in between.

\subsection{Hierarchical Reinforcement Learning}
\label{app:hrl_arc}

For all approaches, we parametrize the policy as a double dueling deep Q-network~\citep{mnih2015human, wang2016dueling, van2016deep}.
The parametrization of the Q-function consists of a state embedding followed by two linear layer heads for the state-value $V_\theta(s)$ function and the advantage $A_\theta(s, a)$ function.
Then the Q-function is computed as $Q_\theta(s, a) = V_\theta(s) + A_\theta(s, a) - \frac{1}{|\mathcal{A}|}\sum_{a' \in \mathcal{A}}A_\theta(s, a)$.

The value function $V_\theta(s)$ and advantage function $A_\theta(s, a)$ are computed as linear layers with output dimension $1$ and $|\mathcal{A}|$ respectively on top of the embedding $e(s)$ of the state $s$.
Recall that the state $s$ consists of two portions: the observation $o$ of the grid or 3D environment, and the instruction $i$ corresponding to the next object to pick up, where the instruction is not present in the demonstrations.
The embedding $e(s)$ of the state $s$ is computed by embedding the observation $e(o)$ with the \texttt{observation encoder} defined in the previous sections, and embedding the instruction $e(i)$ with a 16-dimensional embedding matrix.
Then, the embedding $e(s)$ is a final linear layer with output dimension 128 applied to a ReLU of the concatenation of $e(o)$ and $e(i)$.

\section{Hardware}
All our experiments are conducted on a GeForce RTX 2080 Ti or a GeForce RTX A6000.

\section{Hyperparameters}

\subsection{Frame Prediction}
For these set of experiments, we use the same architecture and hyperparameters as \citet{kim2019variational}. The hyperparameters used for both \textit{Simple Colors} and \textit{Conditional Colors} are:
\begin{itemize}
    \item KL divergence weight $\beta: 1.0$
    \item MDL objective weight $\lambda \text{ (fixed)}: 0.1$
    \item Mini-batch size: $512$
    \item Training iteration: 30000
    \item Number of skills $n_z = 10$
\end{itemize}

\subsection{Multi-task Grid World Segmentation}
We use the following hyperparameters for skill learning from demonstrations on the multi-task grid world environment experiments.
\begin{itemize}
    \item KL divergence weight $\beta = 0$:
    \begin{itemize}
        \item We found that the compression objective readily provides strong and sufficient regularization for the latent code. Adding additional KL divergence often results in the model not being able to reconstruct the actions properly.
    \end{itemize}
    \item MDL objective weight $\lambda$:
    \begin{itemize}
        \item We use a adaptive scheduling that approximate dual gradient descent. $\lambda$ is initialized to $0$. After every gradient step, $\lambda$ is increased by $2.0 \times 10^{-5}$ if $\gL_{\text{ELBO}} \leq 0.05$ (Since $\beta = 0$, this is effectively $\gL_{\text{rec}}$); otherwise, $\lambda$ is decreased by $2.0 \times 10^{-5}$. After each update, the value of $\lambda$ is clipped to $[0, 0.05]$. We find this setting provides the most stable training.
    \end{itemize}
    \item Mini-batch size: $64$
    \item Training iteration: 20000
    \item Number of skills $n_z = 10$
    \item $t_{\text{VQ}} = 0.1$
    \item Learning rate: $0.0005$ with Adam Optimizer
\end{itemize}

\subsection{3D Visual Navigation Segmentation}
We use the same hyperparameter as Multi-task grid world experiments except that the maximum $\gL_\text{ELBO}$ is capped at $0.001$, \textit{i.e.}, $\gL_\text{ELBO} \leq 0.001$.

\subsection{Hierarchical Reinforcement Learning}
\label{app:hrl-hparam}

We use the same hyperparameters for training the policy for all approaches.
Specifically, these hyperparameter values are as follows:
\begin{itemize}[itemsep=0pt]
    \item $\epsilon$-greedy schedule: We linearly decay $\epsilon$ from $1$ to $0.01$ over $500K$ timesteps for dense reward settings and over $5M$ timesteps for sparse reward settings in the multi-task grid world environment. In the 3D visual navigation environment, we decay over $250K$ timesteps.
    \item Discount factor $\gamma$: We use $\gamma = 0.99$ in all of the DQN updates.
    \item Maximum replay buffer size: 50K
    \item Minimum replay buffer size before updating: 500
    \item Learning rate: 0.0001 with the Adam optimizer~\citep{kingma2014adam}
    \item Batch size: 32
    \item Update frequency: every 4 timesteps
    \item Target syncing frequency: every 50K updates for multi-task grid world environment, every 30K updates for 3D navigation
    \item Gradient $\ell_2$-norm clipping: 10
    \item Marginal threshold $\alpha = 0.001$
\end{itemize}

Since the demonstrations do not contain the instruction list observations, we set those to 0 in the demonstrations for the behavior cloning approach.
Though we use $\alpha = 0.001$ in all of the experiments for consistency, we also found that slightly higher values of $\alpha$ yielded greater sample efficiency for \ours.

\section{Algorithm}\label{app:algo}
The Lagrangian for the unconstrained optimization problem is:
\begin{align*}
    \min_{\boldsymbol{\theta}}\max_{\lambda \geq 0}\,\, \lambda \gL_{\text{CL}}(\boldsymbol{\theta}) +  \left(\gL_{\text{ELBO}}(\boldsymbol{\theta}) - C \right).
\end{align*}
In standard KKT condition, the dual variable is introduced on the ELBO, but the two problems are equivalent by inverting the dual variable. Since the only constraint on $\lambda$ is that it is non-negative, this transformation does not change the optimal solution. Further note that we find that in many cases, choosing a fixed constant value for $\lambda$ is sufficient for solving the problem.

\begin{algorithm}[ht]
\caption{\textbf{Learning Options via Compression} (LOVE):
We highlight differences between our method and prior work (VTA) in {\color{blue} blue}.
}\label{alg:love}
\begin{algorithmic}[1]
\State Initialize model parameters $\boldsymbol{\theta}$
\While{not converged}
    \State Sample a trajectory $(\rvx_0, \rva_0, \ldots, \rvx_T)$ from the pre-collected experience $\gD$
    \For{$t = 1, 2, \ldots, T$}
    \State Sample boundary conditioned on entire trajectory $m_{t} \sim q_{\boldtheta}(m_t \mid \rvx_{1:t})$
    \State Sample skill from skill posterior $z_t \sim q_{\boldtheta}(z_t \mid m_t, \rvx_{1:t}, \rva_{1:t})$
    \State Sample abstraction from abstraction posterior $\rvs_{t} \sim q_{\boldtheta}(\rvs_t \mid z_t, \rvx_{t})$
\State Compute probability of the correct action from the decoder $p_{\boldtheta}(\rva_t \mid \rvs_t)$
\EndFor
\State Compute lower bound on maximum likelihood objective $\gL_{\text{ELBO}}$ according to Equation \ref{eqn:rl_elbo}
\State {\color{blue} Compute compression objective $\gL_{\text{CL}}$ according to Equation \ref{eqn:compression_objective}}
\State Update $\boldsymbol{\theta} \gets \boldsymbol{\theta} - \eta \nabla(\gL_{\text{ELBO}} + \textcolor{blue}{\lambda \gL_{\text{CL}})}$ and $\textcolor{blue}{\lambda}$
\EndWhile\label{euclidendwhile}
\end{algorithmic}
\end{algorithm}

\begin{algorithm}[H]
\caption{Executing skill $\rvz$}\label{alg:hrl}
\begin{algorithmic}[1]
\While{skill has not terminated}
\State Compute state abstraction $\mu_{\rvs_t} = \E_{s \sim q_{\mathbf{\theta}}(\rvs_t \mid \rvz, \rvx_t)}\left[s\right]$
\State Take action $\rva_t = \argmax_{\rva} \,\, p_{\mathbf{\theta}}(\rva \mid \mu_{\rvs_t})$
\State Observe next state $\rvx_{t + 1}$
\State Terminate if $\E_{\rvm_{t + 1} \sim q_{\mathbf{\theta}}(\rvm_{t + 1} \mid \rvx_{1:t + 1}) } \left[\rvm_{t + 1}\right] > 0.5$
\EndWhile\label{euclidendwhile}
\end{algorithmic}
\end{algorithm}

\newpage
\section{Visualization of Learned Skills}
\label{app:visualization}

\begin{figure*}[ht]
\begin{center}
\vspace{-1mm}
    \centerline{\includegraphics[width=\textwidth]{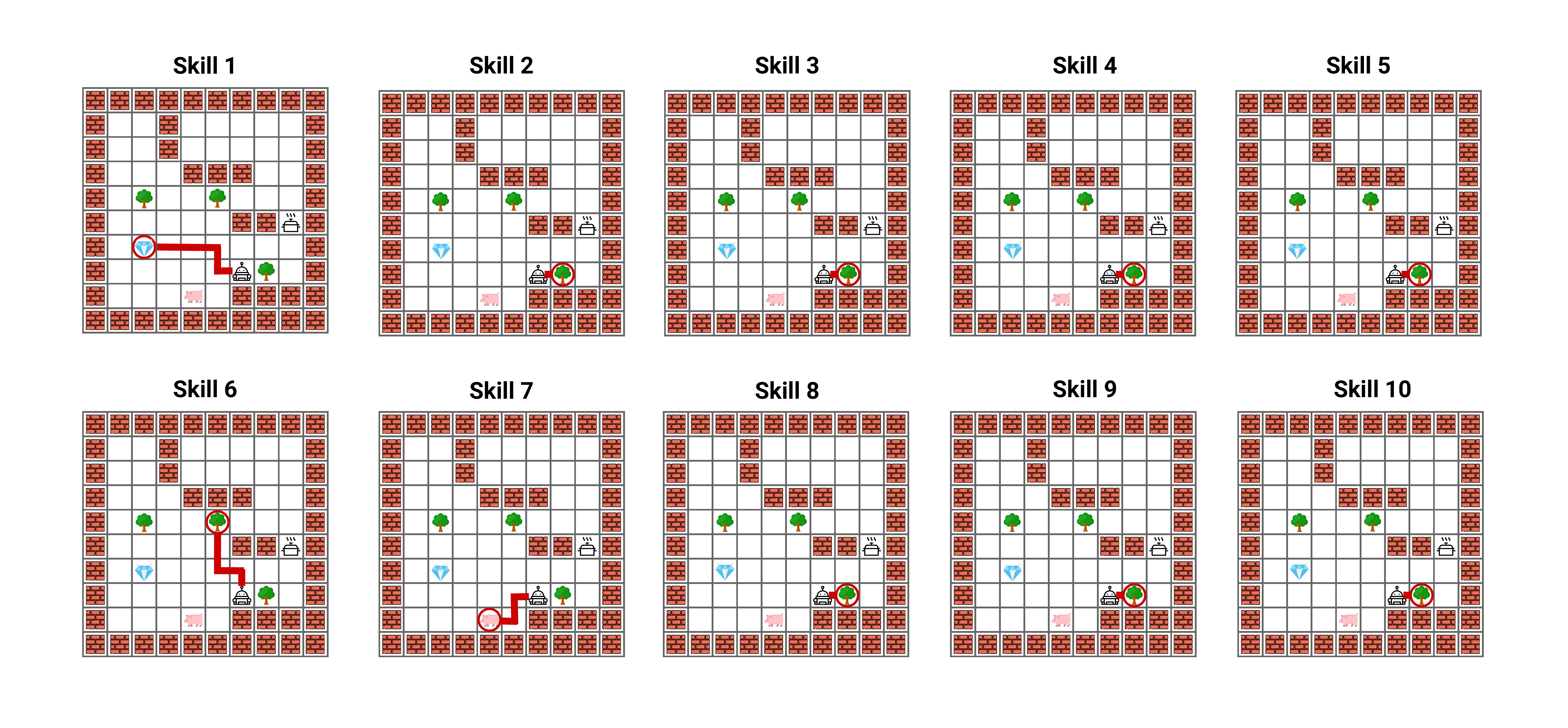}}
\vspace{-4mm}
\caption{Skill visualization for all 10 skills of \ours.}
\label{fig:mdl-vis}
\end{center}
\vspace{-5mm}
\end{figure*}

\begin{figure*}[h!]
    \centering
    \includegraphics[width=\textwidth]{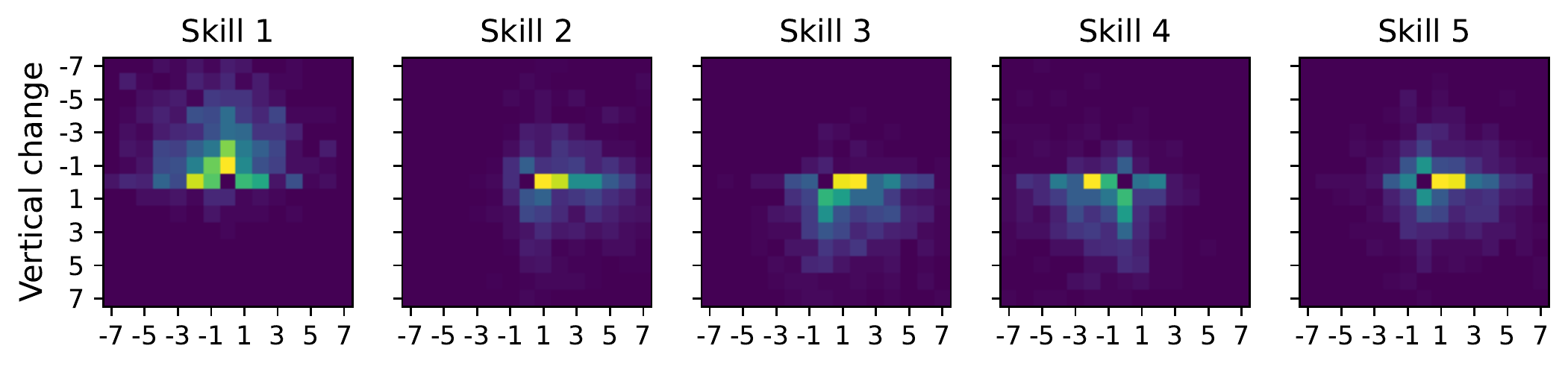}
    \includegraphics[width=\textwidth]{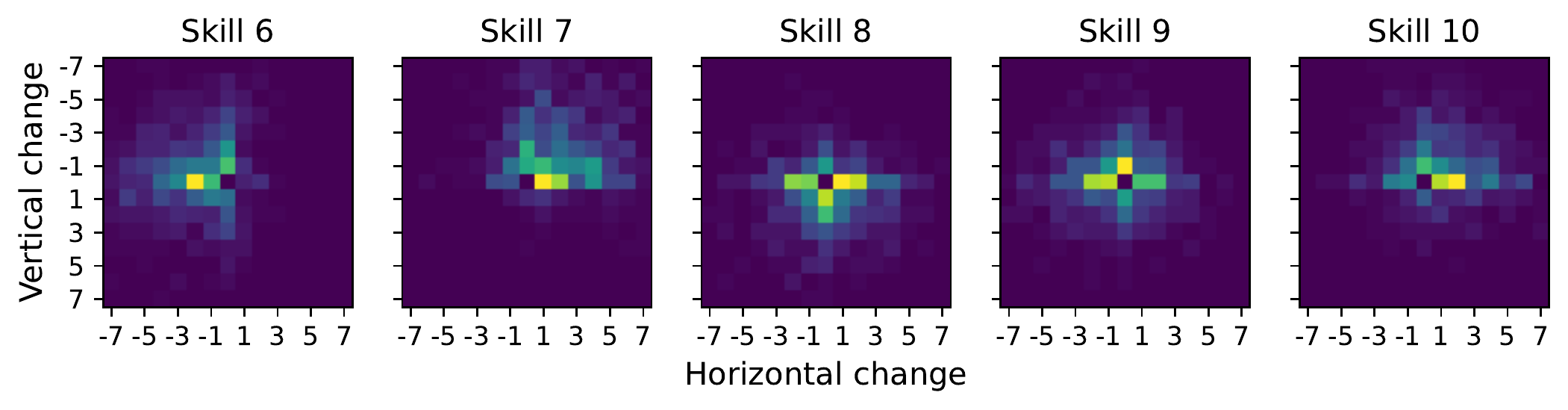}
    \vspace{-5mm}
    \caption{Heat map the agent's end position minus the agent's start position after applying each skill. In other words, this shows what direction the agent moves in after following each skill.}
    \label{fig:heatmap}
\end{figure*}

In Figure \ref{fig:mdl-vis}, we visualize the behavior of following each of \ours's 10 skills on an example task.
Each skill moves to and picks up an object, and the total set of skills covers 3 of the 4 object types in the task.
These skills pick up both nearby objects, such as the tree in skill 2, as well as more distant objects, such as the diamond in skill 1.
This contrasts DDO and VTA, which do not learn skills that move to and pick up objects.

To further understand what each of \ours's skills does, we analyze whether there is a correlation between each skill and either the type of object it picks up, or the location of the object it picks up.
We find that there appears to be little correlation between each skill and the type of object it picks up.
Plotting the frequency of picking up each object type by skill shows that each skill picks up each object type with roughly the same frequency.
Instead, there appears to be a correlation between each skill and the location of the object it picks up, illustrated in Figure \ref{fig:heatmap}.
For example, skill 1 appears to specialize in moving to and picking up objects that are above the agent, while skill 7 tends to pick up objects that are up and to the right of the agent. It is also interesting to note that some skills are much more biased to move in certain direction (e.g., skill 1, 6, 7) while some appear to be more general (e.g., 4, 5, 8, 9) and move in any direction.
Note in Figure \ref{fig:mdl-vis}, it may seem \ours's skills seem short / redundant.
This is due to the fact that \ours's skills each pick up an object, so they naturally appear short when the agent is close to the objects, as in Figure \ref{fig:mdl-vis}.
However, when the agent is far away from the objects, the skills still pick up the objects and are much longer. From Figure~\ref{fig:heatmap}, we can see from the heatmap that even the skills that appear to do same thing in Figure~\ref{fig:mdl-vis} have very distinct behaviors depending on the state they are in.
Also, recall \ours imposes a sparse distribution over skills (Section~\ref{sec:hrl}). After filtering, skill $3$, $7$ and $10$ in Figure~\ref{fig:mdl-vis} are dropped since they have low marginal probability and are not used to describe a significant part of the trajectories.
Figure \ref{fig:mdl-vis} includes redundant skills \ours does not use; the used skills in the sparse distribution have little redundancy.

\section{Option Critic Results}
\begin{figure*}[h]
\begin{center}
\vspace{-1mm}
    \centerline{\includegraphics[width=0.4\linewidth]{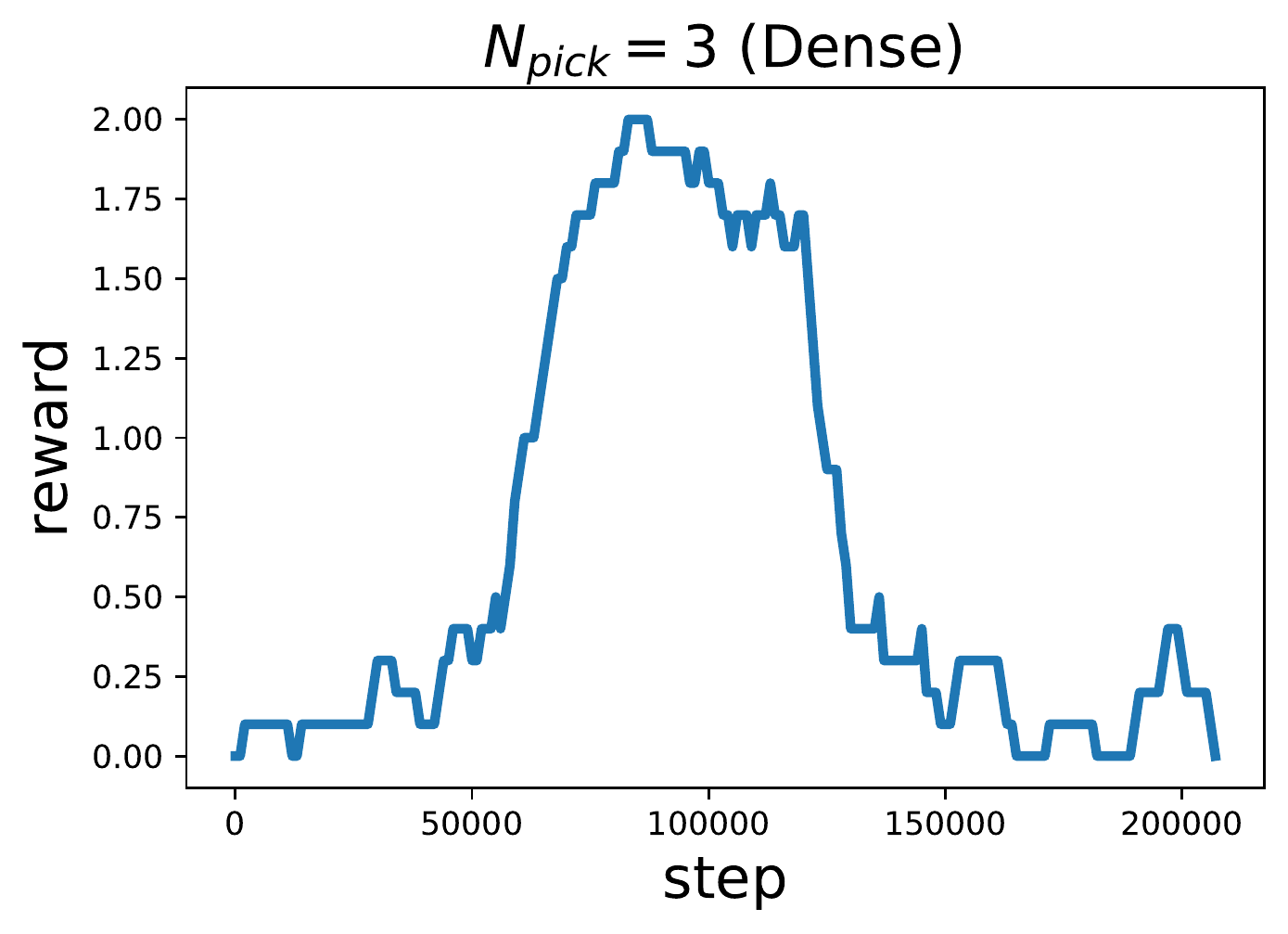}
    \includegraphics[width=0.4\linewidth]{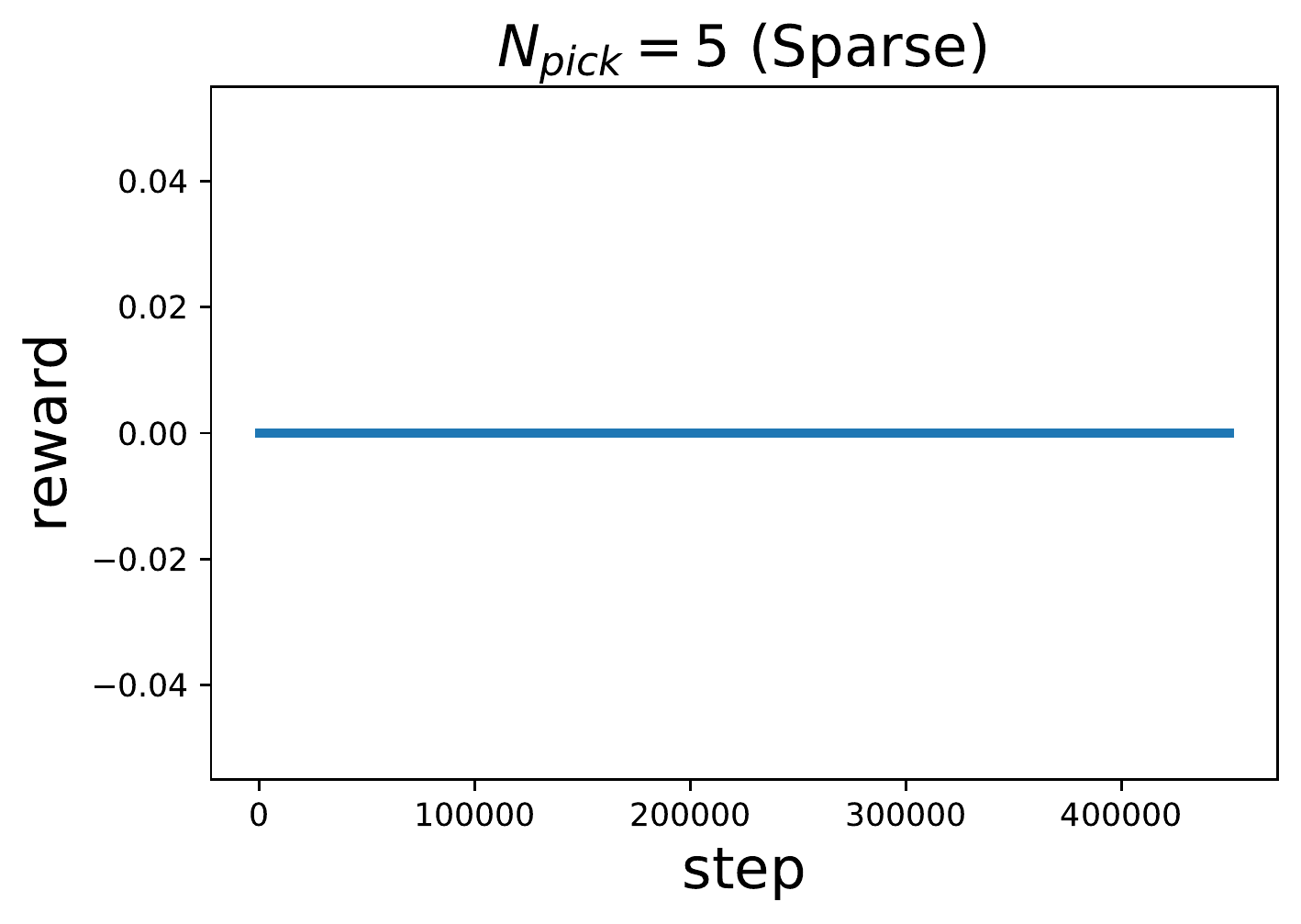}}
\vspace{-2mm}
\caption{Results of running option critic~\citep{bacon2017option} on 2 of the settings we considered in Figure~\ref{fig:grid_returns}.}
\label{fig:oc}
\end{center}
\end{figure*}

Option critic~\citep{bacon2017option} is a classical online HRL algorithm. We show the results of running option critic\footnote{\url{https://github.com/lweitkamp/option-critic-pytorch}} on two of the four RL tasks we considered in Figure~\ref{fig:oc}. The best return achieved over the training is shown in Table~\ref{tab:op-reward}. We set the number of option to $8$ following \citet{bacon2017option} and leave other hyperparameters untouched. $N_{\text{pick}}=3$ with dense reward is the easily setting and $N_{\text{pick}}=5$ with sparse reward is the hardest setting. We see that in $N_{\text{pick}}=3$, option critic is able to reach reward of $2$ (maximum possible is $3$) at around 1M environment steps but the performance deterioates afterwards. In $N_{\text{pick}}=5$, the algorithm fails to make any meaningful progress. These observations may suggest online HRL algorithms like option critic may be insufficient for solving these tasks.

\begin{table}[h]
\centering
\begin{tabular}{ccc}
    \toprule
     & $N_{\text{pick}} = 3$ (Dense) &  $N_{\text{pick}} = 5$  (Sparse)  \\
     \cmidrule(lr){2-2} \cmidrule(lr){3-3}
    Option-Critic & $2.0$ & $0.0$ \\
    \ours (Ours) & $\mathbf{3.0}$ & $\mathbf{0.7}$ \\
    \bottomrule
\end{tabular}
\vspace{3mm}
\caption{Comparison between \ours and \citet{bacon2017option} on the two RL tasks we consider in this work. Each entry is the maximum average return achieved during the course of training for both algorithms. For $N_{\text{pick}} = 3$ (Dense), $3.0$ is the maximum possible return. For $N_{\text{pick}} = 5$ (Sparse), $1.0$ is the maximum possible return.}
\label{tab:op-reward}
\end{table}

\section{Comparison to \citet{zhang2021minimum}}
\begin{table}[h]
\small
\centering
\begin{tabular}{ccccccc}
    \toprule
    &\multicolumn{2}{c}{\textit{Simple Colors}}& \multicolumn{2}{c}{\textit{Conditional Colors}}&\multicolumn{2}{c}{\textit{Navigation}}\\
     & \textbf{MDL} & \textbf{\ours} & \textbf{MDL} & \textbf{\ours}  & \textbf{MDL} & \textbf{\ours}  \\
     \cmidrule(lr){2-2} \cmidrule(lr){3-3} \cmidrule(lr){4-4} \cmidrule(lr){5-5} \cmidrule(lr){6-6} \cmidrule{7-7} 
    Precision & $0.87 $ & $\mathbf{0.99} $ & $0.84 $ & $\mathbf{0.99}$ & $0.79 $ & $\mathbf{0.90} $\\
    Recall & $0.78$ & $\mathbf{0.85}$ & $0.82 $ & $\mathbf{0.83} $& $0.34 $& $\mathbf{0.94} $ \\
    F1 & $0.82 $ & $\mathbf{0.91}$ & $0.83 $ & $\mathbf{0.90}$ & $0.48$ & $\mathbf{0.92}$ \\
    \bottomrule
\end{tabular}
\vspace{3mm}
\caption{Comparison between \ours and \citet{zhang2021minimum} on the three segmentation tasks we consider in this work. We refer to \citet{zhang2021minimum} as \textbf{MDL} in the table.}
\label{tab:zhang}
\end{table}

\citet{zhang2021minimum} learn open-loop skills, which do not condition on the state and therefore cannot adapt to different states. Further, the MDL used in \citet{zhang2021minimum} is equivalent to the variational inference used by VTA (on a different graphical model), which can be seen as greedily compressing each skill independently, rather than compressing a whole trajectory as \ours does. Table~\ref{tab:zhang} reports segmentation results on the Color domain and grid world, akin to Tables~\ref{tab:grid_prec_recall} and \ref{tab:lambda_ablation}. \citet{zhang2021minimum} performs the same as VTA on the Color domains -- as they both use variational inference and there is no state -- which achieves lower precision / recall than \ours. On the grid world navigation, \citet{zhang2021minimum} fails to recover the boundaries, unlike \ours, because skills that navigate to objects require observing the state. This problem is shared by all open-loop approaches.

\section{Number of Initial Skills Ablation}
\begin{table}[h]
\small
\centering
\begin{tabular}{cccccccc}
    \toprule
     & $K=2$ & $K=5$ & $K=10$ & $K=15$  & $K=20$ & $K=30$ & $K=50$  \\
     \cmidrule(lr){2-2} \cmidrule(lr){3-3} \cmidrule(lr){4-4} \cmidrule(lr){5-5} \cmidrule(lr){6-6} \cmidrule(lr){7-7} \cmidrule{8-8} 
    Precision & $0.27 $ & $0.80 $ & $0.90 $ & $0.96$ & $0.96$ & $0.95$ & $0.95$\\
    Recall & $0.53$ & $0.91$ & $0.94$ & $0.96$ & $0.95$& $0.96$ & $0.93$ \\
    F1 & $0.35$ & $0.86$ & $0.92 $ & $0.96$ & $0.95$ & $0.95$ & $0.94$\\
    \bottomrule
\end{tabular}
\vspace{3mm}
\caption{Performance of \ours on the grid navigation task with varying number of initial skills (K).}
\label{tab:skill_ablation}
\end{table}

The number of skills to extract from demonstrations is often not known a priori, so choosing an appropriate value of the number of initial skills before filtering $K$ is important.
Intuitively, we hypothesize that $K$ can be set conservatively: \ours requires at least a minimum number of skills in order to fit the behaviors in the demonstrations, but may be able to gracefully prune out skills if $K$ is set too high via the filtering described in Section~\ref{sec:hrl}.
We test this intuition by varying $K \in \{2, 5, 10, 15, 20, 30, 50\}$ and measuring the segmentation performance on the grid world multi-task domain, and find that it appears to hold true in Table~\ref{tab:skill_ablation}.
For smaller values of $K$, such as $K = 2$ and $K = 5$, \ours's F1 scores significantly degrade.
However, performance remains high for all values where $K$ is sufficiently large.
Hence, we conservatively sizing $K$ to be a large number.

\section{Comparison to Different Regularizers}
\begin{table}[h]
\small
\centering
\begin{tabular}{cccccccc}
    \toprule
     & \texttt{entropy} & \texttt{num switch} & \texttt{VTA(3,5)} & \texttt{VTA(3,10)}  & \texttt{VTA(5,5)} &  \texttt{VTA(5,10)} & \ours \\
     \cmidrule(lr){2-2} \cmidrule(lr){3-3} \cmidrule(lr){4-4} \cmidrule(lr){5-5} \cmidrule(lr){6-6} \cmidrule(lr){7-7} \cmidrule{8-8} 
    Precision & $0.26 $ & $0.80 $ & $0.34 $ & $0.40$ & $0.34$ & \textbf{0.95} & 0.90\\
    Recall & $0.53$ & $0.93$ & $0.46$ & $0.60$ & $0.50$& $0.43$ & \textbf{0.94}\\
    F1 & $0.35$ & $0.86$ & $0.39$ & $0.48$ & $0.40$ & $0.37$ & \textbf{0.92} \\
    \bottomrule
\end{tabular}
\vspace{3mm}
\caption{Comparison of different kinds of regularizers that have been used in the literature for skill segmentation. \ours outperforms all significantly.}
\label{tab:regularizer}
\end{table}

Several prior methods encourage skills to act for more timesteps or prevent skills from switching too frequently, which can similarly help avoid degenerate solutions like \ours.
We note that such prior methods do not necessarily yield skills that help learn new tasks, while the maximum of \ours's objective achieves the information-theoretic limit of compressing the trajectories (under a fixed function class), which is not done in other works.
We empirically compare \ours with several such methods on the segmentation of the multi-task grid world domain, including:

\begin{enumerate}
    \item \emph{Entropy}: This approach regularizes the entropy of skill marginal akin to~\citet{Shankar2020Discovering}, which they refer to this as \textit{parsimony}. Specifically, this approach adds a regularization term $\gH_{p_{\rvz}^{\star}}[\rvz]$ to the objective $\mathcal{L}_\text{ELBO}$.
    \item \emph{Num switches}: This approach penalizes switching between skills, similar to~\citet{harb2018waiting}, a variant of the Option-Critic framework~\citep{bacon2017option}. Specifically, this approach adds a penalty $n_\text{s}$ to the objective $\mathcal{L}_\text{ELBO}$ equal to the number of switches (i.e., the number of times $m_t = 1$) in a demonstration.
    \item VTA($N_\text{max}, l_\text{max})$: VTA includes a prior on its boundary variables, which effectively sets the maximum number of skills per episode to be $N_\text{max}$ and sets the maximum number of actions a skill can take to $l_\text{max}$. We experiment with several settings of $N_\text{max}$ and $l_\text{max}$, including those that use prior knowledge about the domain not used by \ours.
\end{enumerate}

The results are summarized in Table~\ref{tab:regularizer}.
We find that regularizing the number of switches performs fairly well, but \ours achieves higher F1 than other approaches that regularize the skills.

\end{document}